\definecolor{darkGreen}{rgb}{0,0,0.99}
\definecolor{darkRed}{rgb}{0.99,0,0}
\definecolor {processblue}{cmyk}{0.96,0,0,0}
\tikzstyle{Lloyd0} = [rectangle, rounded corners, minimum width=3.5cm, minimum height=0.7cm,text centered,text width=3.5cm, draw=black, fill=green!30]
\tikzstyle{Lloyd2} = [rectangle, rounded corners, minimum width=2cm, minimum height=0.9cm,text centered,text width=2cm, draw=black, fill=green!30]
\tikzstyle{Lloyd} = [rectangle, rounded corners, minimum width=0.5cm, minimum height=0.7cm,text centered,text width=0.5cm, draw=white, fill=white!30]
\tikzstyle{Initz} = [rectangle, rounded corners, maximum width=7.8cm, minimum height=0.7cm,text centered, text width=7.8cm,draw=black, fill=orange!30]
\tikzstyle{Initz0} = [rectangle, rounded corners, maximum width=7.8cm, minimum height=0.7cm,text centered, text width=7.8cm,draw=black, fill=blue!30]
\tikzstyle{arrow} = [thick,->,>=stealth]
\newtheorem{lemma}{Lemma}
\DeclareMathOperator*{\argmin}{arg\,min}
\newtheorem{dfn}{Definition}
\newtheorem{thm}{Theorem}
\definecolor{sarandonga}{rgb}{0.65,0.35,0}
\definecolor{BlueGreen}{rgb}{0.00,0.15,0.85}
\definecolor{Greeno}{rgb}{0.00,0.80,0.20}
\newcommand{\removelatexerror}{\let\@latex@error\@gobble}
\renewcommand\paragraph{\@startsection{paragraph}{4}{\z@}%
            {-2.5ex\@plus -1ex \@minus -.25ex}%
            {1.25ex \@plus .25ex}%
            {\normalfont\normalsize\bfseries}}
\begin{document}

\title{An efficient $K$-means clustering algorithm for massive data}

\author{Marco~Capó,
        Aritz~Pérez,
        and~Jose~A.~Lozano
\IEEEcompsocitemizethanks{\IEEEcompsocthanksitem M. Capó and A. Pérez are at the Basque
Center of Applied Mathematics, Bilbao, Spain, 48009.\protect\\
E-mail: mcapo@bcamath.org , aperez@bcamath.org
\IEEEcompsocthanksitem J.A. Lozano is with the Intelligent Systems
Group, Department of Computer Science and Artifitial Intelligence, University of the Basque Country UPV/EHU, San Sebastián, Spain, 20018.\protect\\
E-mail: ja.lozano@ehu.es}}

\markboth{Journal of \LaTeX\ Class Files,~Vol.~14, No.~8, August~2015}%
{Shell \MakeLowercase{\textit{et al.}}: Bare Demo of IEEEtran.cls for Computer Society Journals}

\IEEEtitleabstractindextext{%
\begin{abstract}
The analysis of continously larger datasets is a task of major importance in a wide variety of scientific fields. In this sense, cluster analysis algorithms are a key element of exploratory data analysis, due to their easiness in the implementation and relatively low computational cost. Among these algorithms, the $K$-means algorithm stands out as the most popular approach, besides its high dependency on the initial conditions, as well as to the fact that it might not scale well on massive datasets. In this article, we propose a recursive and parallel approximation to the $K$-means algorithm  that scales well on both the number of instances and dimensionality of the problem, without affecting the quality of the approximation. In order to achieve this, instead of analyzing the entire dataset, we work on small weighted sets of points that mostly intend to extract information from those regions where it is harder to determine the correct cluster assignment of the original instances. In addition to different theoretical properties, which deduce the reasoning behind the algorithm, experimental results indicate that our method outperforms the state-of-the-art in terms of the trade-off between number of distance computations and the quality of the solution obtained.
\end{abstract}

\begin{IEEEkeywords}
Clustering, massive data, parallelization, unsupervised learning, $K$-means, $K$-means++, Mini-batch.
\end{IEEEkeywords}}

\maketitle

\IEEEdisplaynontitleabstractindextext

%
\IEEEpeerreviewmaketitle

\IEEEraisesectionheading{\section{Introduction}\label{sec:introduction}}

\IEEEPARstart{P}{artitional} clustering is an unsupervised data analysis technique
that intends to unveil the inherent structure of a 
set of points by partitioning it into a number of disjoint groups, called 
clusters. This is done in such 
a way that intra-cluster similarity is high and the inter-cluster similarity
is low. Furthermore, clustering is a basic task in many areas, such as 
artificial intelligence, machine learning and pattern recognition 
\cite{Dubes,Jain,Kanungo2}.

Even when there exists a wide variety of clustering methods, the $K$-means algorithm remains as one of the most popular \cite{Berkhin, Jain2}. In fact, it has been identified as one of the top $10$ algorithms in data mining \cite{Wu}.

\subsection{$K$-means Problem}\label{sec:KP} 

Given a set of $n$ data points (instances) $D=\{\textbf{x}_1,\ldots,\textbf{x}_n\}$ in $\mathbb{R}^d$ and an integer $K$, the {\bf $K$-means problem} is to determine a set of $K$ centroids $C=\{\textbf{c}_1,\ldots,\textbf{c}_K\}$ in $\mathbb{R}^d$, so as to minimize the following error function:

\begin{eqnarray}\label{eq:errorfunction}
E^{D}(C) = \sum\limits_{\textbf{x}\in D} \| \textbf{x}-\textbf{c}_{\textbf{x}}\|^2, \text{ where  } \ \ \textbf{c}_{\textbf{x}}=\argmin\limits_{\textbf{c} \in C} \| \textbf{x}-\textbf{c}\|^2 
\end{eqnarray}

This is a combinatorial optimization problem, since it is equivalent to finding the partition
of the $n$ instances in $K$ groups whose associated set of centers of mass minimizes Eq.\ref{eq:errorfunction}.
The number of possible partitions is a Stirling number of the second kind,
$S(n,K)= \frac{1}{K!} \sum\limits_{j=0}^{K} (-1)^{K-j}\binom{K}{j}j^n$ \cite{Sami}.

Since finding the globally optimal partition is known to be NP-hard \cite{Aloise}, even 
for instances in the plane \cite{Mahajan}, and exhaustive search methods are not
useful under this setting, iterative refinement based algorithms are commonly used to
approximate the solution of the $K$-means and similar problems \cite{Sami,Kaufman, Lloyd}. 
These algorithms iteratively relocate the data points between clusters until a 
locally optimal partition is attained. Among these methods, the most popular is
the { \bf $K$-means algorithm} \cite{Jain2, Lloyd}.

\subsection{$K$-means Algorithm}\label{sec:KA} 

The $K$-means algorithm is an iterative refinement method that consists of 
two stages: Initialization, in which we set the starting set of $K$ centroids, and an iterative stage, called {\bf Lloyd's algorithm} \cite{Lloyd}. 
In the first step of Lloyd's algorithm, each instance is assigned to its closest centroid (assignment step), then the set of centroids is updated as the centers of mass of the instances assigned to the same centroid in the previous step (update step). Finally, a stopping criterion
is verified. The most common criterion implies the computation of the error function (Eq.\ref{eq:errorfunction}) : If the error does not change significantly with respect to the previous iteration, the algorithm stops \cite{Manning}: If $C$ and $C'$ are the set of centroids obtained at consecutive Lloyd's iterations, then the algorithm stops when

\begin{eqnarray}\label{eq:condpar}
|E^{D}(C)-E^{D}(C')|\leq \varepsilon, \text{ for a fixed threshold  } \ \ \varepsilon\ll 1. 
\end{eqnarray}

Conveniently, every step of the $K$-means algorithm can be easily parallelized \cite{Zhao}, which is a major key to meet the scalability of the algorithm \cite{Wu}.

The time needed for the assignment step is  $\mathcal{O}(n\cdot K\cdot d)$, while updating the set of centroids requires $\mathcal{O}(n\cdot d)$ computations and the stopping criterion, based on the computation of the error function, is $\mathcal{O}(n\cdot d)$. Hence, the assignment step is the most computationally demanding and this is due to  the number of distance computations that needs to be done at this step. Taking this into account, the main objective of our proposal is to define a variant of the $K$-means algorithm that controls the trade-off between the number of distance computations and the quality of the solution obtained, oriented to problems with high volumes of data. Lately, this problem has gained special attention due to the exponential increase of the data volumes that scientists, from different backgrounds, face on a daily basis, which hinders the analysis and characterization of such an
information \cite{Jordan}. 

\subsubsection{Common initializations}\label{sec:KI}

It is widely reported in the literature that the performance of Lloyd's algorithm highly depends upon the initialization stage, in terms of the quality of the solution obtained and the running time \cite{Lozano}. A poor initialization, for instance, could lead to an exponential running time in the worst case scenario \cite{Vattani}. 

Ideally, the selected seeding/initialization strategy should deal with different problems, such as outlier detection and cluster oversampling. A lot of research has been done on this topic: A detailed review of seeding strategies can be found in \cite{Redmond, Steinley}.

The standard initialization procedure consists of performing several re-initializations via Forgy's method \cite{Forgy} and keeping the set of centroids with the smallest error \cite{Redmond, Steinley}. Forgy's technique defines the initial set of centroids as $K$ instances selected uniformly at random from the dataset. The intuition behind this approach is that, by choosing the centroids uniformly at random, we are more likely to choose a point near an optimal cluster center, since such points tend to be where the highest density regions are located. Besides the fact that computing the error of each set of centroids is $\mathcal{O}(n\cdot K\cdot d)$ (due to the assignment step), the main disadvantage of this approach is that there is no guarantee that two, or more, of the selected seeds will not be near the center of the same cluster, especially when dealing with unbalanced clusters \cite{Redmond}.

More recently, simple probabilistic based seeding techniques have been developed and, due to their simplicity
and strong theoretical guarantees, they have become quite popular. Among these,
the most relevant is the {\it $K$-means++} algorithm
proposed by Arthur and Vassilvitskii in \cite{Arthur}. $K$-means$++$ selects only the first centroid uniformly at random from the dataset. Each subsequent initial centroid is 
chosen with a probability proportional to the distance with respect to the previously selected set of centroids.

The key idea of this cluster initialization technique is to preserve the diversity of seeds while being robust to outliers. The $K$-means$++$ algorithm leads to a $\mathcal{O}(\log K)$ factor approximation 
\footnote{Algorithm $A$ is an $\alpha$ factor approximation of the $K$-means problem, if 
$E^D(C') \leq \alpha \cdot \min\limits_{C \subseteq \mathbb{R}^d, |C|=K} E^{D}(C)$, for any output $C'$ of $A$.} of the optimal error after the 
initialization \cite{Arthur}. The main drawbacks of this approach refer to its sequential nature, which hinders its parallelization, as well as to the fact that it 
requires $K$ full scans of the entire 
dataset, which leads to a complexity of $\mathcal{O}(n\cdot K\cdot d)$.

In order to alleviate such drawbacks, different variants of $K$-means$++$ have been studied.
In particular, in \cite{Bahmani}, a parallel $K$-means$++$ type algorithm is presented. This parallel variant
achieves a constant factor approximation to the optimal solution after a logarithmic number
of passes over the dataset. Furthermore, in \cite{Bachem},
an approximation to $K$-means$++$ that has a sublinear 
time complexity, with respect to the number of data points,
is proposed. Such an approximation is obtained
via a Markov Chain Monte Carlo sampling based approximation of the 
$K$-means$++$ probability function. The proposed algorithm generates solutions of similar quality to those of $K$-means$++$, at a fraction
of its cost.

\subsubsection{Alternatives to Lloyd's algorithm}\label{sec:MBK} 

Regardless of the initialization, a large amount of work has also been done on reducing the overall computational complexity 
of Lloyd's algorithm. Mainly, two approaches can be distinguished:

\begin{itemize}[leftmargin=*]
 \item {\bf The use of distance pruning techniques}: Lloyd's algorithm can be 
accelerated by avoiding unnecessary distance calculations, i.e., when it can be verified in advanced that no cluster re-assignment is
possible for a certain instance. As presented in \cite{Drake,Elkan,Hamerly}, this can be done with the construction of different
pairwise distance bounds between the set of points and centroids, and additional information, such as
the displacement of every centroid after a Lloyd's iteration.
In particular, in \cite{Hamerly}, reductions of over 80\% of the amount of distance computations are observed.\\

\item {\bf Apply Lloyd's algorithm over a smaller (weighted) set of points}: As previously commented, one of the main
drawbacks of Lloyd's algorithm is that its complexity is proportional to the size of the dataset, meaning that
it may not scale well for massive data 
applications. One way of dealing with this is to apply the algorithm over a smaller set of points rather
than over the entire dataset. Such smaller sets of points are commonly extracted in two different ways:

\begin{enumerate}[wide, labelwidth=!, labelindent=0pt]
 \item {\it Via dataset sampling}: In \cite{Bengio, Bradley, Davidson, Sculley}, 
different statistical techniques are used with the same purpose of reducing the size of the dataset. Among these algorithms, we have the {\it Mini-batch $K$-means} proposed by Sculley in \cite{Sculley}. 
Mini-batch $K$-means is a very popular scalable variant of Lloyd's algorithm that 
proceeds as follows: Given an initial set of centroids obtained via Forgy's algorithm, at every iteration, 
a small fixed amount of samples is
selected uniformly at random and assigned to their corresponding cluster. Afterwards, the cluster centroids are updated as the 
average
of all samples ever assigned to them. This process continues until convergence.
Empirical results, in a range of large web based applications, 
corroborate that a substantial saving of computational time can be obtained at the expense of some loss of cluster quality \cite{Sculley}.
Moreover, very recently, in \cite{Newling}, an accelerated Mini-batch $K$-means algorithm
via the distance pruning approach of \cite{Elkan} was presented.

 \item {\it Via dataset partition}: The reduction of the dataset can also be generated as sets of representatives 
 induced by partitions of the dataset. In particular,
 there have been a number of recent papers that describe $(1 + \varepsilon)$ factor approximation 
algorithms and/or ($K$,$\varepsilon$)-coresets \footnote{
A weighted set of points $W$ is a ($K$,$\varepsilon$)-coreset if, for all set of centroids $C$,
$|F^{W}(C)-E^{D}(C)|\leq \varepsilon \cdot E^{D}(C)$, where $F^{W}(C)=\sum\limits_{\textbf{y}\in W} w(\textbf{y})\cdot \| \textbf{y}-\textbf{c}_{\textbf{y}}\|^2$ and $w(\textbf{y})$ is the weight associated to a representative $\textbf{y} \in W$.} for the
$K$-means problem  \cite{Har,Kumar,Matousek}. However, these variants tend to be exponential in $K$ and are not at all viable in practice \cite{Arthur}. Moreover,
Kanungo et al. \cite{Kanungo} also proposed a $(9+\varepsilon)$ approximated algorithm for the $K$-means problem that is $\mathcal{O}(n^3 \varepsilon^{-d})$, thus it is not useful for massive data applications. In particular, for this kind of applications, another approach has been very recently proposed in \cite{Capo}: The {\it Recursive Partition based $K$-means algorithm}.

\paragraph{ Recursive Partition based $K$-means algorithm}\label{sss:RPKM} 

The Recursive Partition based $K$-means algorithm ({\bf RP$K$M}) is a technique that
approximates the solution 
of the $K$-means problem through a recursive application of a weighted version of Lloyd's algorithm
over a sequence of spatial based-thinner partitions of the dataset: 

\begin{dfn}[Dataset partition induced by a spatial partition]
Given a dataset $D$ and a spatial partition $\mathcal{B}$ of its smallest bounding
box, the partition of the dataset $D$ induced by 
$\mathcal{B}$ is defined as $\mathcal{P}=\mathcal{B}(D)$, where $\mathcal{B}(D)=\{B(D)\}_{B \in \mathcal{B}}$ 
and $B(D)=\{\ \textbf{x}\in D: \textbf{x} \ \textrm{lies on}\ B \in \mathcal{B} \}$ \footnote{From now on, we will refer to each $B \in \mathcal{B}$ as a {\bf block} of the spatial partition $\mathcal{B}$.}.
\end{dfn}

Applying a weighted version of $K$-means algorithm over the dataset partition 
$\mathcal{P}$, consists of executing  Lloyd's
algorithm (Section \ref{sec:KA}) over the set of centers of mass ({\bf representatives}) of $\mathcal{P}$, $\overline{P}$ for all $P \in \mathcal{P}$,
considering their corresponding cardinality ({\bf weight}), $|P|$, when updating the set of centroids. This means, we seek to minimize the weighted error function $E^{\mathcal{P}}(C) = \sum\limits_{P \in \mathcal{P}} |P| \cdot \| \overline{P}-\textbf{c}_{\overline{P}}\|^2$, where $\textbf{c}_{\overline{P}}=\argmin\limits_{\textbf{c} \in C} \|\overline{P}-\textbf{c}\|$.

Afterwards, the same process is repeated over a thinner partition $\mathcal{P}'$ of the dataset
 \footnote{ A partition of the dataset $\mathcal{P}'$ is thinner than $\mathcal{P}$, if
each subset of $\mathcal{P}$ 
can be written as the union of subsets of $\mathcal{P}'$.}, using as initialization the set of centroids obtained for $\mathcal{P}$.
In Algorithm \ref{alg:RPKM_ORIGINAL}, we present a pseudo-code of a RP$K$M type algorithm:

\removelatexerror
\begin{algorithm}[H]\label{alg:RPKM_ORIGINAL}
    \caption{{\bf RP$K$M algorithm pseudo-code}}

   {\bf Input:} Dataset $D$ and number of clusters $K$.\\ 

   {\bf Output:} Set of centroids $C$.\\
   \vspace{0.1cm}
\texttt{Step 1:} Construct an initial partition of $D$, $\mathcal{P}$, and define an initial set of $K$ centroids, $C$.\\
   \texttt{Step 2:} $C =$ \texttt{WeightedLloyd}$(\mathcal{P}, C,K)$. \\
   \While{not Stopping Criterion}{
\texttt{Step 3:}  Construct a dataset partition $\mathcal{P}'$, thinner than $\mathcal{P}$. Set $\mathcal{P} = \mathcal{P}'$.

   \texttt{Step 4:} $C =$ \texttt{WeightedLloyd}$(\mathcal{P}, C,K)$. \\
}
\Return $C$

\end{algorithm}

In general, the RP$K$M algorithm can be divided into three tasks: The construction of an initial partition of the dataset and set of centroids (\texttt{Step 1}), the update of the corresponding set of centroids via weighted Lloyd's algorithm (\texttt{Step 2} and \texttt{Step 4}) and the construction of the sequence of thinner partitions (\texttt{Step 3}). Experimental results have shown the reduction of several orders of computations for RP$K$M with respect to both $K$-means++ and Mini-batch $K$-means, while obtaining competitive approximations to the solution
of the $K$-means problem \cite{Capo}.

\end{enumerate}

\end{itemize}

\subsection{Motivation and contribution}\label{sec:CTB}

In spite of the quality
of the practical results presented in \cite{Capo}, due to the strategy followed
in the construction of the sequence of thinner partitions, there is still large room
for improvement. The results presented in \cite{Capo} refer to 
a RP$K$M variant called {\bf grid based RP$K$M}.
In the case of the grid based RP$K$M, the initial spatial partition is 
defined by the grid obtained after dividing each side  of the smallest bounding box of $D$ by half, i.e., a grid with $2^d$ equally sized
blocks.
In the same fashion, at the $i$-th grid based RP$K$M iteration, the corresponding
spatial partition is
updated by dividing each of its
blocks into $2^{d}$ new blocks, i.e., $\mathcal{P}$ can have up to $2^{i \cdot d}$ representatives. It can be shown that this approach produces a ($K$,$\varepsilon$)-coreset with $\varepsilon$ descending exponentially with respect to the number of iterations \footnote{See Theorem \ref{theo:coreset} at Appendix \ref{App:AppendixA}.}. 

Taking this into consideration,
three main problems arise for the grid based RP$K$M:

\begin{itemize}[leftmargin=*]
 \item {\bf Problem 1.} {\it It does not scale well on the dimension $d$}: 
 Observe that, for a relatively low number of iterations, $i\simeq \log_{2}(n)/d$, 
 and/or dimensionality
 $d\simeq \log_{2}(n)$, applying this RP$K$M version can be similar to applying
 Lloyd's algorithm over the entire dataset, i.e., no reduction of distance computations might be observed,
 as $|\mathcal{P}|\simeq n$. 
 In fact, for the experimental section in \cite{Capo},  $d,i \leq 10$.
 \item {\bf Problem 2.} {\it It is independent of the dataset $D$}: As we noticed before, regardless of the analyzed dataset $D$, the sequence of partitions of the grid based RP$K$M is induced by an equally sized spatial partition of the smallest bounding box containing $D$. In this sense, the induced partition does not consider features of the dataset, such as its density, to construct the sequence of partitions: A large amount of computational resources might be spent on regions whose misclassification does not add a significant error to our approximation. Moreover, the construction of every partition of the sequence has a $O(n \cdot d)$ cost, which is particularly expensive for massive data applications, as $n$ can be huge.
 \item {\bf Problem 3.} {\it It is independent of the problem}: The partition strategy of the grid based RP$K$M does not explicitly consider the optimization problem that $K$-means seeks to minimize. Instead, it offers a simple/inefficient way of generating a sequence of spatial thinner partitions.

 $\ \ $ The reader should note that each block of the spatial partition can be seen as a restriction over the $K$-means optimization problem, that enforces all the instances contained in it to belong to the same cluster. Therefore, it is of our interest to design smarter spatial partitions oriented to focus most of the computational resources on those regions where the correct cluster affiliation is not clear. By doing this, not only can a large amount of computational resources be saved, but also some additional theoretical properties can be deduced.
 
 $\ \ $ Among other properties that we discuss in Section \ref{Sec:Contribution}, at first glance it can be observed that if all the instances in a set of points, $P$, are correctly assigned for two sets of centroids, $C$ and $C'$, then the difference between the error of both sets of centroids is equivalent to the difference of their weighted error, i.e., $E^{P}(C)-E^{P}(C')= E^{\{ P \}}(C)-E^{\{ P \}}(C')$ \footnote{\label{note1} See Lemma \ref{theo:wellassigned} in Appendix \ref{App:AppendixA}.}.
Moreover, if this occurs for each subset of a dataset partition $\mathcal{P}$ and the centroids generated after two consecutive weighted $K$-means iterations, then we can guarantee a
monotone decrease of the error for the entire dataset \footnote{\label{note1} See Theorem \ref{lemma:wellassigned} in Appendix \ref{App:AppendixA}.}.  Likewise, we
can actually compute the reduction of the error for the newly obtained set of centroids, without computing
the error function for the entire dataset, as in this case $E^{D}(C)-E^{D}(C')= E^{\mathcal{P}}(C)-E^{\mathcal{P}}(C')$. Last but not least, when every block contains instances belonging to the same cluster, the solution obtained
by our weighted approximation is actually a local optima of Eq.\ref{eq:errorfunction}
\footnote{\label{note1} See Theorem \ref{lemma:wellassigned2} in Appendix \ref{App:AppendixA}.}.
\end{itemize}

In any case, independently of the partition strategy, RP$K$M algorithm offers some interesting properties
such as the no clustering repetition. This is, none of the obtained groupings of the $n$ instances
into $K$ groups can be repeated at the current RP$K$M iteration or for any thinner partition than the current
one. This is a useful property since it can be guaranteed that the algorithm 
discards many possible clusterings at each RP$K$M iteration
using a much reduced set of points than the entire dataset. Furthermore, this fact
enforces the decrease of the maximum number
of Lloyd iterations that we can have for a given partition. In practice, it is also common to observe 
a monotone decrease of the error for the entire dataset \cite{Capo}.

Bearing all these facts in mind, we propose a RP$K$M type approach called the {\it Boundary Weighted $K$-means} algorithm ({\bf BW$K$M}). The name of our proposal summarizes the main intuition behind it: To generate competitive approximations to the $K$-means problem by dividing those blocks that may not be well assigned, which conform the current cluster boundaries of our weighted approximation.

\begin{dfn}[Well assigned blocks]\label{def:wellAssigned}
Let $C$ be a set of centroids and $D$ be a given dataset. We say that a block $B$ is well assigned with respect to $C$ and $D$ if every point $\textbf{x} \in B(D)$ is assigned to the same centroid $c \in C$. 
\end{dfn}
The notion of well assigned blocks is of our interest as RP$K$M associates all the instances contained in a certain block to the same cluster, which corresponds to the one that its center of mass belongs to. Hence, our goal is to divide those blocks that are not well assigned. Moreover, in order to control the growth of the set of representatives and to avoid unnecessary distance computations, we have developed a non-expensive partition criterion that allows us to detect blocks that may not be well assigned. Our main proposal can be divided into three tasks:

\begin{itemize}[leftmargin=*]
 \item {\bf Task 1}: Design of a partition criterion that decides whether or not to divide a certain block, using only information obtained from the  weighted Lloyd's algorithm.
 \item {\bf Task 2}: Construct an initial partition of the dataset given a fixed number of blocks, which are mostly placed on the cluster boundaries.
 \item {\bf Task 3}: Once a certain block is decided to be cut, guarantee a low increase on the number of representatives without affecting, if possible, the quality of the approximation. In particular, we propose a criterion that, in the worst case, has a linear growth in the number of representatives after an
iteration.
\end{itemize}

Observe that, both {\bf Task 2} and {\bf Task 3}, ease the scalability of the algorithm with respect to the dimensionality of the problem, $d$ ({\bf Problem 1}). Furthermore, the goal of {\bf Task 1} and {\bf Task 2} is to generate partitions of the dataset that, ideally, contain well assigned subsets, i.e., all the instances contained in a certain subset of the partition belong to the same cluster ({\bf Problem 2} and {\bf Problem 3}).
As we previously commented, this fact implies additional theoretical properties in terms of the quality of our approximation.

The rest of this article is organized as follows: In Section \ref{Sec:Contribution}, we describe the proposed algorithm, introduce some notation
and discuss some theoretical properties of our proposal. In Section \ref{Sec:Experimental}, we present a set of experiments in which we analyze the effect of different factors, such as the size of the dataset and the dimension of the instances over the performance of our algorithm. Additionally we compare these results with the ones obtained by the state-of-the-art. Finally, in Section \ref{Sec:NextSteps}, we define the next steps and possible improvements to our current work.

\section{BW$K$M algorithm} \label{Sec:Contribution}

In this section, we present the Boundary Weighted $K$-means algorithm. As we already commented, BW$K$M is a scalable improvement of the grid based RP$K$M algorithm \footnote{From now on, we assume each block $B \in \mathcal{B}$ to be a hyperrectangle.}, that generates competitive approximations to the $K$-means problem, while reducing the amount of computations that the state-of-the-art algorithms require for the same task. BW$K$M reuses all the information generated at each weighted Lloyd run to construct a sequence of thinner partitions that alleviates {\bf Problem 1}, {\bf Problem 2} and {\bf Problem 3}.

Our new approach makes major changes in all the steps in Algorithm \ref{alg:RPKM_ORIGINAL} except in \texttt{Step 2} and \texttt{Step 4}. In these steps, a weighted version of Lloyd's algorithm is applied over the set of representatives and weights of the current dataset partition $\mathcal{P}$. This process has a $O(|\mathcal{P}|\cdot K \cdot d)$ cost, hence it is of our interest to control the growth of $|\mathcal{P}|$, which is highlighted in both
{\bf Task 2} and {\bf Task 3}. 

In the following sections, we will describe in detail each step of BW$K$M. In Section \ref{SubSec:DetectBound}, Section \ref{SubSec:InitialPartition} and Section \ref{SubSec:ThinnerP} we elaborate on {\bf Task 1}, {\bf Task 2} and {\bf Task 3}, respectively.

\subsection{A cheap criterion for detecting well assigned blocks} \label{SubSec:DetectBound}

BW$K$M tries to efficiently determine the set of well assigned blocks in order to update the dataset partition. In the following definition, we introduce a criterion that will help us verify this mostly using information generated by our weighted approximation:

\begin{dfn}\label{def:epsilon}
Given  a set of $K$ centroids, $C$, a set of points $D \subseteq \mathbb{R}^d$, a block $B$ and $P=B(D)\ne \emptyset$ the subset of points contained in $B$. We define the misassignment function for $B$ given $C$ and $P$ as: 
\begin{eqnarray}\label{eq:criterio}
\epsilon_{C,D}(B)=\max\{0,2\cdot l_{B}-\delta_{P}(C)\},
\end{eqnarray}
where $\delta_{P}(C)=\min\limits_{\textbf{c} \in C \setminus \textbf{c}_{\overline{P}}} \| \overline{P}-\textbf{c}\|-\| \overline{P}-\textbf{c}_{\overline{P}}\|$ and $l_{B}$ is the length of the diagonal of $B$. In the case $P=B(D)=\emptyset$, we set $\epsilon_{C,D}(B)=0$. 
\end{dfn}

The following result is used in the construction of both the initial and the sequence of thinner partitions:

\begin{restatable}{thm}{primetheo}\label{thm:FronteraGratis1}
Given a set of $K$ centroids, $C$, a dataset, $D \subseteq \mathbb{R}^d$, and a block $B$, if  $\epsilon_{C,D}(B) = 0$, then $\textbf{c}_{\textbf{x}}= \textbf{c}_{\overline{P}}$ for all $\textbf{x} \in P=B(D)\ne \emptyset$.\footnote{The proof of Theorem \ref{thm:FronteraGratis1} is in Appendix \ref{App:AppendixA}.}
\end{restatable}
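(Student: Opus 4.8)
The plan is to exploit the fact that the hypothesis $\epsilon_{C,D}(B)=0$ is exactly the separation condition $\delta_{P}(C)\ge 2\,l_{B}$, and then to transfer the ``well separated'' property of the representative $\overline{P}$ to each individual instance $\mathbf{x}\in P$ by a pair of triangle-inequality estimates.

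First I would record the geometric observation on which the whole argument rests. Since $B$ is a hyperrectangle and every point of $P=B(D)$ lies in $B$, the center of mass $\overline{P}$ is a convex combination of points of $B$ and hence also lies in $B$. Because any two points of a hyperrectangle are at distance at most the diagonal length $l_{B}$, this yields
\begin{eqnarray*}
\|\mathbf{x}-\overline{P}\|\le l_{B}\quad\text{for every }\mathbf{x}\in P.
\end{eqnarray*}
Next I would bound, for an arbitrary $\mathbf{x}\in P$, its distance to $\mathbf{c}_{\overline{P}}$ from above and its distance to any competitor $\mathbf{c}\in C\setminus\{\mathbf{c}_{\overline{P}}\}$ from below:
\begin{eqnarray*}
\|\mathbf{x}-\mathbf{c}_{\overline{P}}\|\le\|\mathbf{x}-\overline{P}\|+\|\overline{P}-\mathbf{c}_{\overline{P}}\|\le l_{B}+\|\overline{P}-\mathbf{c}_{\overline{P}}\|,
\end{eqnarray*}
\begin{eqnarray*}
\|\mathbf{x}-\mathbf{c}\|\ge\|\overline{P}-\mathbf{c}\|-\|\mathbf{x}-\overline{P}\|\ge\|\overline{P}-\mathbf{c}\|-l_{B}.
\end{eqnarray*}

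To conclude I would compare these two bounds: it suffices to verify $l_{B}+\|\overline{P}-\mathbf{c}_{\overline{P}}\|\le\|\overline{P}-\mathbf{c}\|-l_{B}$, that is, $2\,l_{B}\le\|\overline{P}-\mathbf{c}\|-\|\overline{P}-\mathbf{c}_{\overline{P}}\|$. Since $\delta_{P}(C)$ is by definition the minimum of the right-hand side over $\mathbf{c}\in C\setminus\{\mathbf{c}_{\overline{P}}\}$, each such difference is at least $\delta_{P}(C)$, and $\epsilon_{C,D}(B)=0$ forces $2\,l_{B}\le\delta_{P}(C)$; hence the required inequality holds for every competitor $\mathbf{c}$. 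Therefore $\|\mathbf{x}-\mathbf{c}_{\overline{P}}\|\le\|\mathbf{x}-\mathbf{c}\|$ for all $\mathbf{c}\ne\mathbf{c}_{\overline{P}}$, so $\mathbf{c}_{\overline{P}}$ is a closest centroid to $\mathbf{x}$ and $\mathbf{c}_{\mathbf{x}}=\mathbf{c}_{\overline{P}}$.

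I expect the only genuinely delicate point to be the geometric claim that $\overline{P}\in B$, which is what licenses the uniform bound $\|\mathbf{x}-\overline{P}\|\le l_{B}$; this relies entirely on the standing assumption that every block is a hyperrectangle and therefore convex. Once that is in place, the remainder is a routine two-sided triangle-inequality estimate combined with the definition of $\delta_{P}(C)$, so I do not anticipate any further obstacle.
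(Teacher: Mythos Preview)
Your proposal is correct and follows essentially the same approach as the paper's proof: both arguments use the convexity of $B$ to place $\overline{P}\in B$ and obtain $\|\mathbf{x}-\overline{P}\|\le l_B$, then apply the triangle inequality twice (once to bound $\|\mathbf{x}-\mathbf{c}_{\overline{P}}\|$ from above, once to bound $\|\mathbf{x}-\mathbf{c}\|$ from below) and combine with the hypothesis $2l_B\le\delta_P(C)$. The paper chains the two triangle-inequality steps into a single line, whereas you present them separately, but the mathematical content is identical.
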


In other words, if the misassignment function of a block is zero, then the block is well assigned. Otherwise, the block may not be well assigned. Even though the condition in Theorem \ref{thm:FronteraGratis1} is a sufficient condition, we will use the following heuristic rule during the development of the algorithm: The larger the misassignment function of a certain block is, then the more likely it is to contain instances with different cluster memberships.

In particular, Theorem \ref{thm:FronteraGratis1} offers an efficient and effective way of verifying that all the instances contained in a block $B$ belong to the same cluster, using only information related to the structure of $B$ and the set of centroids, $C$. Observe that we do not need any information associated to the individual instances in the dataset, $\textbf{x} \in P$. The criterion just requires some distance computations with respect to the representative of $P$, $\overline{P}$, that are already obtained from the weighted Lloyd's algorithm. 

\begin{dfn}\label{def:boundary}
Let $D$ be a dataset, $C$ be a set of $K$ centroids and $\mathcal{B}$ be a spatial partition. We define the boundary of $\mathcal{B}$ given $C$ and $D$ as

\begin{equation}
\mathcal{F}_{C,D}(\mathcal{B})=\{B \in \mathcal{B}: \epsilon_{C,D}(B)> 0\}
\end{equation}
\end{dfn}

The boundary of a spatial partition is just the subset of blocks with a positive misassignment function value, that is, the blocks that may not be well assigned. In order to control the size of the spatial partition and the number of distance computations, BW$K$M only splits blocks from the boundary.

In Fig.\ref{fig:TheoExp}, we observe the information needed for a certain block of the spatial partition, the one marked out in black, to verify the criterion presented in Theorem \ref{thm:FronteraGratis1}. In this example, we only set two cluster centroids (blue stars) and the representative of the instances in the block, $\overline{P}$, given by the purple diamond. In order to compute the misassignment function of the block, we require the length of the {\bf three} segments: Distance between the representative with respect to its two closest centroids in $C$ (blue dotted lines) and the diagonal of the block (purple dotted line). If the misassignment function is zero, then we know that all the instances contained in the block belong to the same cluster. Observe that, in this example, there are instances in both red and blue clusters, the misassignment function is positive, thus, the block is included in the boundary.

\begin{figure}[H]
\begin{center}
    \includegraphics[height=0.28\textwidth,width=0.50\textwidth]{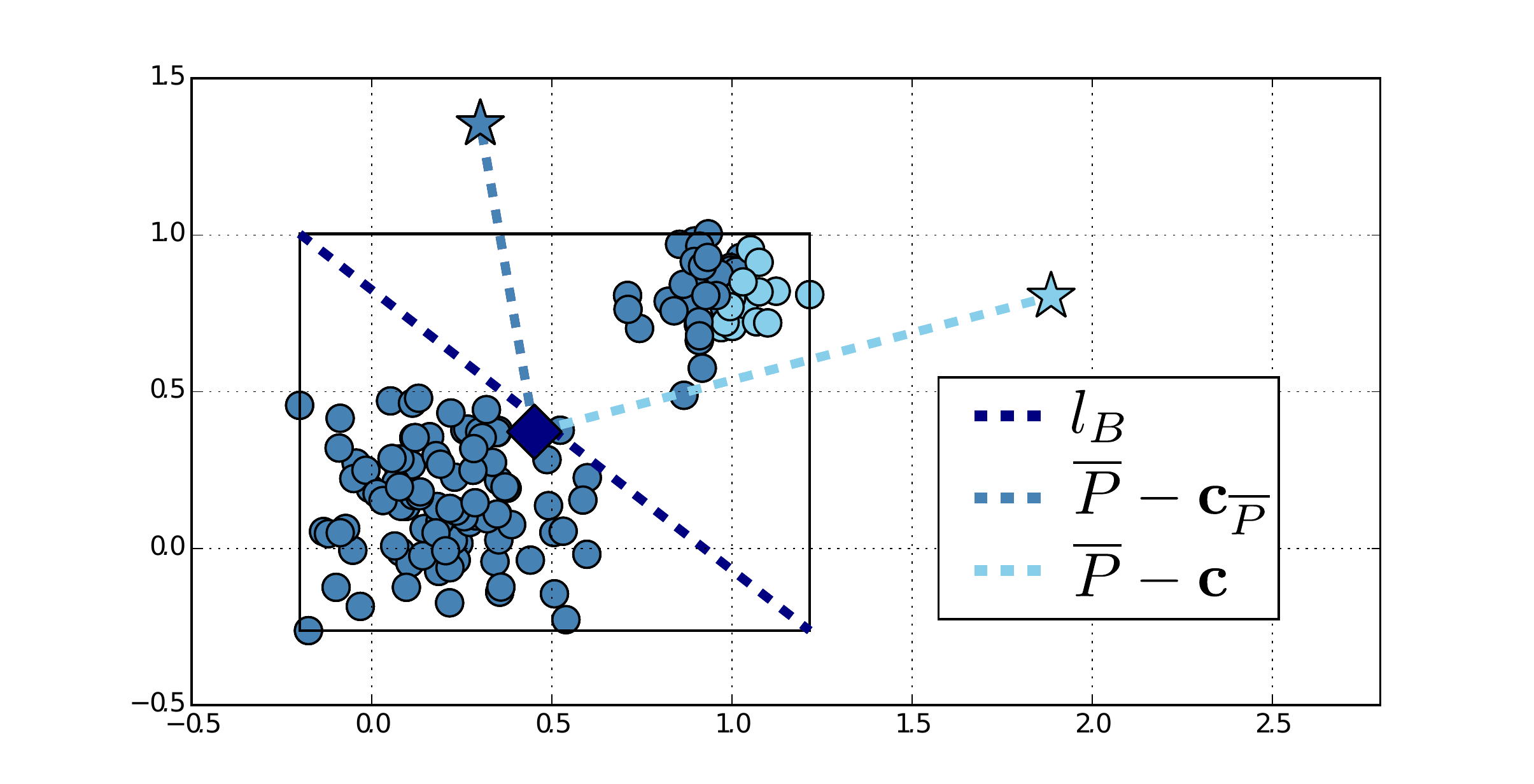}   
    \caption{Information required for computing the misassignment function of the block $B$,  $\epsilon_{C,D}(B)$, for $K=2$.}
    \label{fig:TheoExp}
\end{center}
\end{figure}

\begin{restatable}{thm}{sectheo}\label{thm:ErrBound2}
Given a dataset, $D$, a set of $K$ centroids $C$ and a spatial partition $\mathcal{B}$ of the dataset $D$, the following inequality is satisfied:

\begin{eqnarray}
|\hspace{-7mm}&&E^{D}(C)-E^{\mathcal{P}}(C)| \leq \nonumber\\ 
&&\sum\limits_{\substack{B \in \mathcal{B}}} 2 \cdot |P| \cdot \epsilon_{C,D}(B) \cdot (2 \cdot l_{B}+ \| \overline{P}- \textbf{c}_{\overline{P}} \|) + \frac{|P|-1}{2} \cdot l_{B}^2, \nonumber
\end{eqnarray}
where $P=B(D)$ and $\mathcal{P}= \mathcal{B}(D)$ \footnote{The proof of Theorem \ref{thm:ErrBound2} is in Appendix \ref{App:AppendixA}.}.
\end{restatable}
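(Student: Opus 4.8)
The plan is to prove the bound block by block. Writing $P=B(D)$ for each $B\in\mathcal{B}$, both errors split as sums over blocks, $E^{D}(C)=\sum_{B\in\mathcal{B}}\sum_{\textbf{x}\in P}\|\textbf{x}-\textbf{c}_{\textbf{x}}\|^2$ and $E^{\mathcal{P}}(C)=\sum_{B\in\mathcal{B}}|P|\cdot\|\overline{P}-\textbf{c}_{\overline{P}}\|^2$, so by the triangle inequality it suffices to bound $\bigl|\sum_{\textbf{x}\in P}\|\textbf{x}-\textbf{c}_{\textbf{x}}\|^2-|P|\cdot\|\overline{P}-\textbf{c}_{\overline{P}}\|^2\bigr|$ by the corresponding summand on the right. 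The idea is to insert the intermediate quantity $\sum_{\textbf{x}\in P}\|\textbf{x}-\textbf{c}_{\overline{P}}\|^2$, the cost of forcing every point of $P$ onto the representative's centroid, and write the per-block difference as the sum of a \emph{dispersion} term $\sum_{\textbf{x}\in P}\|\textbf{x}-\textbf{c}_{\overline{P}}\|^2-|P|\cdot\|\overline{P}-\textbf{c}_{\overline{P}}\|^2$ and a \emph{misassignment} term $\sum_{\textbf{x}\in P}\bigl(\|\textbf{x}-\textbf{c}_{\textbf{x}}\|^2-\|\textbf{x}-\textbf{c}_{\overline{P}}\|^2\bigr)$, bounding each separately.

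For the dispersion term I would use the parallel-axis (bias--variance) identity $\sum_{\textbf{x}\in P}\|\textbf{x}-\textbf{c}_{\overline{P}}\|^2=\sum_{\textbf{x}\in P}\|\textbf{x}-\overline{P}\|^2+|P|\cdot\|\overline{P}-\textbf{c}_{\overline{P}}\|^2$, valid because $\overline{P}$ is the center of mass of $P$. This cancels the second summand exactly and leaves $\sum_{\textbf{x}\in P}\|\textbf{x}-\overline{P}\|^2$. Rewriting this as $\frac{1}{2|P|}\sum_{\textbf{x},\textbf{y}\in P}\|\textbf{x}-\textbf{y}\|^2$ and using that every pairwise distance within the hyperrectangle $B$ is at most its diagonal $l_{B}$ gives $\sum_{\textbf{x}\in P}\|\textbf{x}-\overline{P}\|^2\le\frac{|P|-1}{2}\,l_{B}^2$, which is exactly the second term of the claimed bound.

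The harder part is the misassignment term; here $\textbf{c}_{\textbf{x}}$ is the true nearest centroid, so each summand is nonpositive and only the points with $\textbf{c}_{\textbf{x}}\ne\textbf{c}_{\overline{P}}$ contribute. Factoring the difference of squares as $(\|\textbf{x}-\textbf{c}_{\textbf{x}}\|-\|\textbf{x}-\textbf{c}_{\overline{P}}\|)(\|\textbf{x}-\textbf{c}_{\textbf{x}}\|+\|\textbf{x}-\textbf{c}_{\overline{P}}\|)$, I would bound the two factors independently. Since $\overline{P}$ lies in the convex block $B$, we have $\|\textbf{x}-\overline{P}\|\le l_{B}$, hence $\|\textbf{x}-\textbf{c}_{\overline{P}}\|\le\|\overline{P}-\textbf{c}_{\overline{P}}\|+l_{B}$, and the second factor is at most $2\|\textbf{x}-\textbf{c}_{\overline{P}}\|\le 2(l_{B}+\|\overline{P}-\textbf{c}_{\overline{P}}\|)$. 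For the first factor, the key estimate --- and the crux of the argument --- is that for a misassigned point $\|\textbf{x}-\textbf{c}_{\overline{P}}\|-\|\textbf{x}-\textbf{c}_{\textbf{x}}\|\le\epsilon_{C,D}(B)$: combining $\|\textbf{x}-\textbf{c}_{\overline{P}}\|\le\|\overline{P}-\textbf{c}_{\overline{P}}\|+l_{B}$ with $\|\textbf{x}-\textbf{c}_{\textbf{x}}\|\ge\|\overline{P}-\textbf{c}_{\textbf{x}}\|-l_{B}\ge\|\overline{P}-\textbf{c}_{\overline{P}}\|+\delta_{P}(C)-l_{B}$ (the last step using the definition of $\delta_{P}(C)$, since $\textbf{c}_{\textbf{x}}\ne\textbf{c}_{\overline{P}}$) yields a difference of at most $2l_{B}-\delta_{P}(C)=\epsilon_{C,D}(B)$. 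Multiplying the two factor bounds and summing over the at most $|P|$ misassigned points gives $\bigl|\sum_{\textbf{x}\in P}(\|\textbf{x}-\textbf{c}_{\textbf{x}}\|^2-\|\textbf{x}-\textbf{c}_{\overline{P}}\|^2)\bigr|\le 2|P|\,\epsilon_{C,D}(B)\,(l_{B}+\|\overline{P}-\textbf{c}_{\overline{P}}\|)$, which is dominated by the first term of the claimed bound. Adding the two per-block estimates and summing over $\mathcal{B}$ completes the proof; the only genuine obstacle is the chain of triangle inequalities around $\overline{P}$ that ties the first factor to $\epsilon_{C,D}(B)$, and this is precisely where the containment $\|\textbf{x}-\overline{P}\|\le l_{B}$ and the definition of $\delta_{P}(C)$ do the work.
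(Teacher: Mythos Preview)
Your proposal is correct and follows essentially the same route as the paper: rewrite $E^{\mathcal{P}}(C)$ via the parallel-axis identity to isolate the dispersion term $\sum_{\textbf{x}\in P}\|\textbf{x}-\overline{P}\|^2$ (bounded by $\tfrac{|P|-1}{2}l_B^2$ via pairwise distances), then factor the misassignment difference of squares and control each factor through triangle inequalities around $\overline{P}$ together with the definition of $\delta_P(C)$. Your bound on the sum factor, $2(l_B+\|\overline{P}-\textbf{c}_{\overline{P}}\|)$, is in fact slightly sharper than the paper's $2(2l_B+\|\overline{P}-\textbf{c}_{\overline{P}}\|)$ (the paper detours through $\|\overline{P}-\textbf{c}_{\textbf{x}}\|$ rather than using $\|\textbf{x}-\textbf{c}_{\textbf{x}}\|\le\|\textbf{x}-\textbf{c}_{\overline{P}}\|$ directly), but you correctly observe that it is dominated by the stated term, so the argument closes.
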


According to this result, we must increase the amount of well assigned blocks and/or reduce the diagonal lengths of the blocks of the spatial partition, so that our weighted error function  approximates better the $K$-means error function, Eq.\ref{eq:errorfunction}. Observe that by reducing the diagonal of the blocks, not only is the condition of Theorem \ref{thm:FronteraGratis1} more likely to be satisfied, but also we are directly reducing both additive terms of the bound in Theorem \ref{thm:ErrBound2}. This last point gives the intuition for our new partition strategy: i) split only those blocks in the boundary and ii) split them on their largest side.

\subsection{Initial Partition} \label{SubSec:InitialPartition}

In this section, we elaborate on the construction of the initial dataset partition used by the BW$K$M algorithm (see \texttt{Step 1} of Algorithm \ref{alg:RPKM_A}, where the main pseudo-code of BW$K$M is). Starting with the smallest bounding box of the dataset, the proposed procedure iteratively divides subsets of blocks of the spatial partition with high probabilities of not being well assigned. In order to determine these blocks, in this section we develop a probabilistic heuristic based on the misassignment function, Eq.\ref{eq:criterio}. 

As our new cutting criterion is mostly based on the evaluation of the misassignment function associated to a certain block, we firstly need to construct a starting spatial partition of size $m' \geq K$, from where we can select the set of $K$ centroids with respect to which the function is computed (\texttt{Step 1}). 

From then on, multiple sets of centroids $C$ are selected via a weighted $K$-means++ run over the set of representatives of the dataset partition, for different subsamplings. This will allow us to estimate a probability distribution that quantifies the chances of each block of not being well assigned (\texttt{Step 2}). Then, according to this distribution, we randomly select the most promising blocks to be cut (\texttt{Step 3}), and divide them until reaching a number of blocks $m$ (\texttt{Step 4}). In Algorithm \ref{alg:InitialPartition}, we show the pseudo-code of the algorithm proposed for generating the initial spatial partition.

  \removelatexerror

\begin{algorithm}[H]\label{alg:InitialPartition}
    \caption{{\bf Construction of the initial partition}}
   {\bf Input:} Dataset $D$, number of clusters $K$, integer $m'>K$, size of the initial spatial partition $m>m'$.\\
   {\bf Output:} Initial spatial partition $\mathcal{B}$ and its induced dataset partition, $\mathcal{P}=\mathcal{B}(D)$.\\
   \vspace{0.1cm}   
   \texttt{Step 1}: Obtain a starting spatial partition of size $m'$, $\mathcal{B}$ (Algorithm \ref{alg:StartingPartition}).\\
   \While{$|\mathcal{B}|< m$}{
   		\texttt{Step 2}: Compute the cutting probability, $Pr(B)$ for $B \in \mathcal{B}$ (Algorithm \ref{alg:computeProb}).\\
        \texttt{Step 3}: Sample $\min\{|\mathcal{B}|,m-|\mathcal{B}|\}$ blocks from $\mathcal{B}$, with replacement, according to $\Pr(\cdot)$ to determine a subset of blocks $\mathcal{A} \subseteq \mathcal{B}$. \\
   		\texttt{Step 4}: Split each $B \in \mathcal{A}$ and update $\mathcal{B}$.
   }
   \texttt{Step 5}: Construct $\mathcal{P}=\mathcal{B}(D)$.\\
\Return $\mathcal{B}$ and $\mathcal{P}$.
\end{algorithm}

In \texttt{Step 1}, a partition of the smallest bounding box containing the dataset $D$, $B_D$, of size $m'>K$ is obtained by splitting recursively the blocks according to the pseudo-code shown in Algorithm \ref{alg:StartingPartition} --see the comments below. Once we have the spatial partition of size $m'$, we iteratively produce thinner partitions of the space as long as the number of blocks is lower than $m$. At each iteration, the process is divided into three steps: In \texttt{Step 2}, we estimate the cutting probability $Pr(B)$ for each block $B$ in the current space partition $\mathcal{B}$ using Algorithm \ref{alg:computeProb} --see the comments below. Then, in \texttt{Step 3}, we 
randomly sample (with replacement) $\min\{|\mathcal{B}|,m-|\mathcal{B}|\}$ blocks from $\mathcal{B}$ according to $Pr(\cdot)$ to construct the subset of blocks $\mathcal{A} \subseteq \mathcal{B}$, i.e., $|\mathcal{A}|\leq \min\{|\mathcal{B}|,m-|\mathcal{B}|\} $. Afterwards, each of the selected blocks in $\mathcal{A}$ is replaced by two smaller blocks obtained by splitting $B$ in the middle point of its longest side. Finally, the obtained spatial partition $\mathcal{B}$ and the induced dataset partition $\mathcal{B}(D)$ (of size lower or equal to $m$) are returned. 

\begin{algorithm}[H]\label{alg:StartingPartition}
    \caption{{\bf Step 1 of Algorithm \ref{alg:InitialPartition}}}
   {\bf Input:} Dataset $D$, partition size $m'>K$, sample size $s<n$.\\
   {\bf Output:} A spatial partition of size $m'$, $\mathcal{B}$.\\
   \vspace{0.1cm}
   
   - Set $\mathcal{B}=\{B_D\}$.
   
   \While{$|\mathcal{B}|< m'$}{
   		- Take a random sampling of size $s$, $S \subset D$ .\\ 
        - Obtain a subset of blocks, $\mathcal{A} \subseteq \mathcal{B}$, by sampling, with replacement, $\min\{|\mathcal{B}|,m'-|\mathcal{B}|\}$ blocks according to a probability proportional to $l_B \cdot |B(S)|$, for each $B \in \mathcal{B}$.\\
   		- Split the selected blocks $\mathcal{A}$ and update $\mathcal{B}$.  
        }
\Return $\mathcal{B}$.
\end{algorithm}

Algorithm \ref{alg:StartingPartition} generates the starting spatial partition of size $m'$ of the dataset $D$. This procedure recursively obtains thinner partitions by splitting a subset of up to $\min\{|\mathcal{B}|,m'-|\mathcal{B}|\}$ blocks selected by a random sampling with replacement according to a probability proportional to the product of the diagonal of the block $B$, $l_B$, by its weight, $|B(S)|$. At this step, as we can not estimate how likely it is for a given block to be well assigned with respect to a set of $K$ representatives, the goal is to control both weight and size of the generated spatial partition, i.e., to reduce the possible number of cluster misassignments, as this cutting procedure prioritizes those blocks that might be large and dense.
Ultimately, as we reduce this factor, we improve the accuracy of our weighted approximation- see Theorem \ref{thm:ErrBound2}.

This process is repeated until a spatial partition with the desired number of blocks, $m'\geq K$, is obtained. Such a partition is later used to determine the sets of centroids which we use to verify how likely it is for a certain block to be well assigned. 

\begin{algorithm}[H]\label{alg:computeProb}
    \caption{{\bf Step 2 of Algorithm \ref{alg:InitialPartition}}}
   {\bf Input:} A spatial partition $\mathcal{B}$ of size higher than $K$, dataset $D$, number of clusters $K$, sample size $s$, number of repetitions $r$.\\
   {\bf Output:} Cutting probability $\Pr(B)$ for each $B \in \mathcal{B}$.\\
   \vspace{0.1cm}   
   \For{$i=1,\ldots,r$}{

	-Take subsample $S^i\subseteq D$ of size $s$ and construct $\mathcal{P}= \mathcal{B}(S^i)$.\\
    -Obtain a set of centroids $C^i$ by applying $K$-means++ over the representatives of $\mathcal{P}$.\\
	- Compute $\epsilon_{S^i,C^i}(B)$  for all $B \in \mathcal{B}$ (Eq. \ref{eq:criterio}).
    
	}
\texttt{Step 4}: Compute $Pr(B)$ for every $B \in \mathcal{B}$, using $\epsilon_{S^i,C^i}(B)$ for $i=1,..,r$  (Eq. \ref{eq:cuttinProb}).\\
\Return $\Pr(\cdot)$.
\end{algorithm}

In Algorithm \ref{alg:computeProb}, we show the pseudo-code used in \texttt{Step 2} of Algorithm \ref{alg:InitialPartition} for computing the cutting probabilities associated to each block $B \in \mathcal{B}$, $Pr(B)$. Such a probability function depends on the misassignment function associated to each block with respect to multiple $K$-means++ based set of centroids. To generate these sets of centroids, $r$ subsamples of size $s$, with replacement, are extracted from the dataset, $D$. In particular, the \textbf{cutting probabilities} is expressed as follows:
\begin{equation}
\Pr(B)=  \frac{\sum_{i=1}^r \epsilon_{S^i,C^i}(B)}{\sum_{B' \in \mathcal{B}} \sum_{i=1}^r \epsilon_{S^i,C^i}(B')} \label{eq:cuttinProb}
\end{equation}
for each $B \in \mathcal{B}$, where $S^i$ is the subset of points sampled and $C^i$ is the set of $K$ centroids obtained via $K$-means++ for $i=1,...,r$. As we commented before, the larger the misassignment function is, then the more likely it is for the corresponding block to contain instances that belong to different clusters. It should be highlighted that a block $B$ with a cutting probability $Pr(B)=0$ is well assigned for all $S^i$ and $C^i$, with $i=1,..,r$.

Even when cheaper seeding procedures, such as a Forgy type initialization, could be used, $K$-means ++ avoids cluster oversampling, and so one would expect the corresponding boundaries not to divide subsets of points that are supposed to have the same cluster affiliation. Additionally, as previously commented, this initialization also tends to lead to competitive solutions. Later on, in Section \ref{sss:ParSe}, we will comment on the selection of the different parameters, used in the initialization ($m$, $m'$, $r$ and $s$).

\subsection{Construction of the sequence of thinner partitions} \label{SubSec:ThinnerP}

In this section, we provide the pseudo-code of the BW$K$M algorithm and introduce a detailed description of the construction of the sequence of thinner partitions, which is the basis of BW$K$M.
In general, once the initial partition is constructed via algorithm \ref{alg:InitialPartition}, BW$K$M progresses iteratively by alternating i) a run of weighted Lloyd's algorithm over the current partition and ii) the creation of a thinner partition using the information provided by the weighted Lloyd's algorithm. The pseudo-code of the BW$K$M algorithm can be seen in Algorithm \ref{alg:RPKM_A}.

\removelatexerror
\begin{algorithm}[H]\label{alg:RPKM_A}
    \caption{{\bf BW$K$M Algorithm}}

   {\bf Input:} Dataset $D$, number of clusters $K$ and initialization parameters $m'$, $m$, $s$, $r$.\\ 

   {\bf Output:} Set of centroids $C$.
   
   \vspace{0.10cm}
   
      \texttt{Step 1:} Initialize $\mathcal{B}$ and $\mathcal{P}$ via Algorithm \ref{alg:InitialPartition}, with input $m'$, $m$, $s$, $r$, and obtain $C$ by applying a weighted $K$-means++ run over the set of representatives of $\mathcal{P}$.\\
      
\texttt{Step 2:} $C=$ \texttt{WeightedLloyd}$(\mathcal{P}, C,K)$.\\      

   \While{not Stopping Criterion}{
   
  \texttt{Step 3:} Update dataset partition $\mathcal{P}$:\\

 - Compute $\epsilon_{C,D}(B)$  for all $B \in \mathcal{B}$.\\
 - Select $\mathcal{A}\subseteq \mathcal{F}_{C,D}(\mathcal{B})\subseteq \mathcal{B}$ by sampling, with replacement,  $|\mathcal{F}_{C,D}(\mathcal{B})|$ blocks according to $\epsilon_{C,D}(B)$, for all $B \in \mathcal{B}$.\\
 - Cut each block in $\mathcal{A}$ and update $\mathcal{B}$ and $\mathcal{P}$.

\texttt{Step 4:}  $C=$ \texttt{WeightedLloyd}$(\mathcal{P}, C,K)$.

}
\Return $C$
\end{algorithm}

In \texttt{Step 1}, the initial spatial partition $\mathcal{B}$ and the induced dataset partition, $\mathcal{P}=\mathcal{B}(D)$, are generated via Algorithm \ref{alg:InitialPartition}. Afterwards, the initial set of centroids is obtained through a weighted version of $K$-means++ over the set of representatives of $\mathcal{P}$.

Given the current set of centroids $C$ and the partition of the dataset $\mathcal{P}$, the set of centroids is updated in \texttt{Step 2} and \texttt{Step 4} by applying the weighted Lloyd's algorithm. It must be commented that the only difference between these two tasks is the fact that \texttt{Step 2} is initialized with a set of centroids obtained via weighted $K$-means++ run, while \texttt{Step 4} utilizes the set of centroids generated by the weighted Lloyd's algorithm over the previous dataset partition. In addition, in order to compute the misassignment function $\epsilon_{C,D}(B)$ for all $B \in \mathcal{B}$ in \texttt{Step 3} (see Eq.\ref{eq:criterio}), we store the following information provided by the last iteration of the weighted Lloyd's algorithm: for each $P \in \mathcal{P}$, the two closest centroids to the representative $\overline{P}$ in $C$ are saved (see Figure \ref{fig:TheoExp}).

In \texttt{Step 3}, a spatial partition thinner than $\mathcal{B}$ and its induced dataset partition are generated. For this purpose, the misassignment function, $\epsilon_{C,D}(B)$ for all $B \in \mathcal{B}$ is computed and the boundary $\mathcal{F}_{C,D}(\mathcal{B})$ is determined using the information stored at the last iteration of \texttt{Step 2}. Next, as the misassignment criterion in Theorem \ref{thm:FronteraGratis1} is just a sufficient condition, instead of dividing all the blocks that do not satisfy it, we prioritize those blocks that are less likely to be well assigned: A set $\mathcal{A}$ of  blocks is selected by sampling with replacement $|\mathcal{F}_{C,D}(\mathcal{B})|$ blocks from $\mathcal{B}$ with a (cutting) probability proportional to $\epsilon_{C,D}(B)$. Note that the size of $\mathcal{A}$ is at most $|\mathcal{F}_{C,D}(\mathcal{B})|$. Afterwards, in order to reduce as much as possible the length of the diagonal of the newly generated blocks and control the size of the thinner partition, each block in $\mathcal{A}$ is divided in the middle point of its largest side. Each block is split once into two equally shaped hyper-rectangles and it is replaced in $\mathcal{B}$ to produce the new thinner spatial partition. Finally, given the new spatial partition $\mathcal{B}$, its induced  dataset partition is obtained $\mathcal{P}=\mathcal{B}(D)$.

It should be noted that the cutting criterion, Eq.\ref{eq:criterio}, is more accurate, i.e., it detects more well assigned blocks, as long as we evaluate it over the smallest bounding box of each block of the spatial partition, since we minimize the maximum distance (diagonal) between any two points in the block. Therefore, when updating the data partition in \texttt{Step 3}, we also recompute the diagonal of the smallest bounding box of each subset.

\texttt{Step 2} and \texttt{Step 3} are then repeated until a certain stopping criterion is satisfied (for details on different stopping criteria, see Section \ref{sss:StCrit}).

\subsubsection{Computational complexity of the BW$K$M algorithm} \label{ss:Comp}

In this section, we provide the computational complexity of each step of BW$K$M, in the worst case.

The construction of the initial spatial partition, the corresponding induced dataset  partition and the set of centroids of BW$K$M (\texttt{Step 1}) has the following computational cost:
$\mathcal{O}(\max\{r \cdot s \cdot m^2, \ r \cdot K \cdot d \cdot m^2, \ \mathcal{O}( n\cdot\max\{ m, \ d\}) \})$. Each of the previous terms corresponds to the complexity of \texttt{Step 1}, \texttt{Step 2} and \texttt{Step 5} in Algorithm \ref{alg:InitialPartition}, respectively, which are the most computationally demanding procedures of the initialization. Even when these costs are deduced from the worst possible scenario, which is overwhelmingly improbable, in Section \ref{sss:ParSe}, we will comment on the selection of the initialization parameters in such a way that the cost of this step is not more expensive than that of the $K$-means algorithm, i.e., $\mathcal{O}(n \cdot K \cdot d)$.

As mentioned at the beginning of Section \ref{sec:MBK}, \texttt{Step 2} of Algorithm \ref{alg:RPKM_A}  (the weighted Lloyd's algorithm) has a computational complexity of $\mathcal{O}(|\mathcal{P}|\cdot K \cdot d)$. In addition, \texttt{Step 3} executes $\mathcal{O}(|\mathcal{P}| \cdot K)$ computations to verify the cutting criterion, since all the distance computations are obtained from the previous weighted Lloyd iteration. Moreover, assigning each instance to its corresponding block and updating the bounding box for each subset of the partition is $\mathcal{O}(n\cdot d)$. In summary, since $|\mathcal{P}|\leq n$, then BW$K$M algorithm has an overall computational complexity of $\mathcal{O}(n \cdot K \cdot d)$ in the worst case.

\subsection{Additional Remarks} \label{SubSec:AddCom}

In this section, we discuss additional features of the BW$K$M algorithm, such as the selection of the initialization parameters for BW$K$M, we also comment on different possible stopping criteria, with their corresponding computational costs and theoretical guarantees.

\subsubsection{Parameter selection}\label{sss:ParSe}

The construction of the initial space partition and the corresponding induced dataset partition of BW$K$M (see Algorithm \ref{alg:InitialPartition} and \texttt{Step 1} of Algorithm \ref{alg:RPKM_A}) depends on the parameters $m$, $m'$, $r$, $s$, $K$ and $D$, while the core of BW$K$M (\texttt{Step 2} and \texttt{Step 3}) only depends on $K$ and $D$. In this section, we propose how to select the parameters $m$, $m'$, $r$ and $s$, keeping in mind the following objectives: i) to guarantee BW$K$M  having a computational complexity equal to or lower than $\mathcal{O}(n\cdot K \cdot d)$, which corresponds to the cost of Lloyd's algorithm, and ii) to obtain an initial spatial partition with a large amount of well assigned blocks.

In order to ensure that the computational complexity of BW$K$M's initialization is, even in the worst case, $\mathcal{O}(n\cdot K \cdot d)$, we must take  $m$, $m'$, $r$ and $s$ such that $r \cdot s \cdot m^{2}$ , $r \cdot m^{2} \cdot K \cdot d$ and $n \cdot m$ are $\mathcal{O}(n \cdot K \cdot d)$. On the other hand, as we want such an initial partition to minimize the number of blocks that may not be well assigned, we must consider the following facts:  i) the larger the diagonal for a certain block $B \in \mathcal{B}$ is, then the more likely it is for $B$ not to be well assigned, ii) as the number of clusters $K$ increases, then any block $B \in \mathcal{B}$ has more chances of containing instances with different cluster affiliations, and iii) as $s$ increases, the cutting probabilities become better indicators for detecting those blocks that are not well assigned.

Taking into consideration these observations, and assuming that $r$ is a predefined small integer, satisfying $r \ll n/s$, we propose the use of $m=\mathcal{O}(\sqrt{K \cdot d})$ and $s=\mathcal{O}(\sqrt{n})$. Not only does such a choice satisfy the complexity constraints that we just mentioned (See Theorem \ref{thm:ComplexityIn} in Appendix \ref{App:AppendixA}), but also, in this case, the size of the initial partition increases with respect to both dimensionality of the problem and number of clusters: Since at each iteration, we divide a block only on one of its sides, then, as we increase the dimensionality, we need more cuts (number of blocks) to have a sufficient reduction of  its diagonal (observation i)). Analogously, the number of blocks and the size of the sampling increases with respect to the number of clusters and the actual size of the dataset, respectively (observation ii) and iii)). In particular, in the experimental section, Section \ref{Sec:Experimental}, we used $m=10\cdot \sqrt{K\cdot d}$, $s=\sqrt{n}$ and $r=5$.

\subsubsection{Stopping Criterion}\label{sss:StCrit}

As we commented in Section \ref{sec:CTB}, one of the advantages of constructing spatial partitions with only well assigned blocks is that our algorithm, under this setting, converges to a local minima of the $K$-means problem over the entire dataset and, therefore, there is no need to execute any further run of the BW$K$M algorithm as the set of centroids will remain the same for any thinner partition of the dataset:

\begin{restatable}{thm}{thirdtheo}\label{lemma:wellassigned2}
If $C$ is a fixed point of the weighted $K$-means algorithm for a spatial partition $\mathcal{B}$, for which all of its blocks are well assigned, then $C$ is a fixed point of the $K$-means algorithm on $D$. \footnote{The proof of Theorem \ref{lemma:wellassigned2} in Appendix \ref{App:AppendixA}.}
\end{restatable}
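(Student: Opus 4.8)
The plan is to show that the partition of $D$ into $K$-means clusters induced by $C$ coincides with the partition of the representatives induced by the weighted assignment, and then to check that the weighted update (center-of-mass) condition satisfied by $C$ is exactly the unweighted one. Recall that $C$ is a fixed point of the $K$-means algorithm on $D$ if and only if, for every $\textbf{c}\in C$, the centroid $\textbf{c}$ equals the center of mass of its $K$-means cluster $D_{\textbf{c}}=\{\textbf{x}\in D:\textbf{c}_{\textbf{x}}=\textbf{c}\}$; likewise, $C$ being a fixed point of the weighted $K$-means algorithm for $\mathcal{P}=\mathcal{B}(D)$ means that $\textbf{c}$ equals the weighted center of mass $\frac{\sum_{P\in\mathcal{P}_{\textbf{c}}}|P|\cdot\overline{P}}{\sum_{P\in\mathcal{P}_{\textbf{c}}}|P|}$ of the parts $\mathcal{P}_{\textbf{c}}=\{P\in\mathcal{P}:\textbf{c}_{\overline{P}}=\textbf{c}\}$ assigned to it.

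The central step I would carry out first is to use the well-assigned hypothesis to match these two clusterings. Fix a block $B\in\mathcal{B}$ and let $P=B(D)$. Since $B$ is well assigned, all the instances of $P$ share one common closest centroid $\textbf{c}\in C$, so they all lie in the Voronoi region of $\textbf{c}$. This region is an intersection of half-spaces and hence convex, so the representative $\overline{P}$, being a convex combination of the instances of $P$, also lies in it; therefore $\textbf{c}_{\overline{P}}=\textbf{c}$, i.e., the common centroid of $P$ is precisely the one its representative is assigned to (this matches the conclusion of Theorem \ref{thm:FronteraGratis1} under its sufficient criterion $\epsilon_{C,D}(B)=0$). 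Consequently every part of $\mathcal{P}$ is contained in the $K$-means cluster of its own representative, and since $\mathcal{P}$ partitions $D$ this yields $D_{\textbf{c}}=\bigcup_{P\in\mathcal{P}_{\textbf{c}}}P$ for every $\textbf{c}\in C$.

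It then remains to transport the barycenter condition across this identification. Using that $\overline{P}$ is the center of mass of $P$, i.e., $|P|\cdot\overline{P}=\sum_{\textbf{x}\in P}\textbf{x}$, and summing over the disjoint parts of $\mathcal{P}_{\textbf{c}}$, I obtain $\sum_{P\in\mathcal{P}_{\textbf{c}}}|P|\cdot\overline{P}=\sum_{\textbf{x}\in D_{\textbf{c}}}\textbf{x}$ and $\sum_{P\in\mathcal{P}_{\textbf{c}}}|P|=|D_{\textbf{c}}|$. Hence the weighted center of mass of the representatives assigned to $\textbf{c}$ equals the center of mass of $D_{\textbf{c}}$. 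Because $C$ is a fixed point of the weighted $K$-means algorithm, $\textbf{c}$ equals the former, so it equals the latter; as $\textbf{c}\in C$ was arbitrary, $C$ satisfies the update-step fixed-point condition of the $K$-means algorithm on $D$, which is the claim.

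The only delicate point, and the one I would be most careful about, is the convexity argument together with the handling of instances lying on a Voronoi boundary, i.e., equidistant to two or more centroids. Such ties must be resolved by a single convention shared by the weighted and the unweighted assignments so that the equality $\textbf{c}_{\overline{P}}=\textbf{c}$ holds exactly; under the mild genericity assumption that no representative is equidistant to its two nearest centroids this issue disappears, and in any case such ties do not affect the center-of-mass identity established in the previous paragraph.
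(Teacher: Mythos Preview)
Your proof is correct and follows essentially the same approach as the paper: characterize fixed points via the center-of-mass condition, use the well-assigned hypothesis to identify $D_{\textbf{c}}$ with $\bigcup_{P\in\mathcal{P}_{\textbf{c}}}P$, and then transport the barycenter identity using $|P|\cdot\overline{P}=\sum_{\textbf{x}\in P}\textbf{x}$. Your convexity argument for why the common centroid of a well-assigned block coincides with $\textbf{c}_{\overline{P}}$ is in fact more explicit than the paper's, which simply asserts the resulting identities $|\mathcal{G}_i(D)|=\sum_{P\in\mathcal{G}_i(\mathcal{P})}|P|$ and $\sum_{\textbf{x}\in\mathcal{G}_i(D)}\textbf{x}=\sum_{P\in\mathcal{G}_i(\mathcal{P})}\sum_{\textbf{x}\in P}\textbf{x}$ without justification.
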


To verify this criterion, we can make use of the concept of boundary of a spatial partition (Definition \ref{def:boundary}). In particular, observe that if $\mathcal{F}_{C,D}(\mathcal{B})=\emptyset$, then one can guarantee that all the blocks of $\mathcal{B}$ are well assigned with respect to both $C$ and $D$. To check this, we just need to scan the misassignment function value for each block, i.e., it is just $\mathcal{O}(|\mathcal{P}|)$. In addition to this criterion, in this section we will propose three other stopping criteria:  

\begin{itemize}[leftmargin=*]
\item {\it A practical computational criterion}: We could set, in advance, the amount of computational resources that we are willing to use and stop when we exceed them. In particular, as the computation of distances is the most expensive step of the algorithm, we could set a maximum number of distances as a stopping criterion.
 \item {\it A Lloyd's algorithm type criterion}: As we mentioned in Section \ref{sec:KA}, the common practice is to run Lloyd's algorithm until the reduction of the error, after a certain iteration, is small, see Eq \ref{eq:condpar}. As in our weighted approximation we do not have access to the error $E^{D}(C)$, a similar approach is to stop the algorithm when the obtained set of centroids, in consecutive iterations, is smaller than a fixed threshold, ${\varepsilon}_{w}$. We can actually set this threshold in a way that the stopping criterion of Lloyd's algorithm is satisfied. For instance, for ${\varepsilon}_{w}=\sqrt{l^{2}+\frac{\varepsilon^2}{n^2}}-l$, if $\|C-C'\|_{\infty} \leq{\varepsilon}_{w}$, then the criterion in Eq.\ref{eq:condpar} is satisfied\footnote{See Theorem \ref{thm:epsilon} in Appendix \ref{App:AppendixA}}. However, this would imply additional $\mathcal{O}(K\cdot d)$ computations at each iteration.
\item {\it A criterion based on the accuracy of the weighted error}: We could also consider the bound obtained at Theorem \ref{thm:ErrBound2} and stop when it is lower than a predefined threshold. This will let us know how accurate our current weighted error is with respect to the error over the entire dataset. All the information in this bound is obtained from the weighted Lloyd iteration and the  information of the block and its computation is just $\mathcal{O}(|\mathcal{P}|)$.
\end{itemize}

\section{Experiments} \label{Sec:Experimental}

In this section, we perform a set of experiments so as to analyze the relation between the number of distances computed and the quality of the approximation for the {\bf BW$K$M} algorithm proposed in Section \ref{Sec:Contribution}. In particular, we compare the performance of BW$K$M with respect to different methods known for the quality of their approximations: Lloyd's algorithm initialized via i) Forgy ({\bf F$K$M}), ii) $K$-means++ ({\bf $K$M++}) and iii) the Markov chain Monte Carlo sampling based approximation of the $K$-means$++$ ({\bf $K$MC2}). From now on we will refer to these approaches as {\it Lloyd's algorithm based methods}. We also consider the Minibatch $K$-means, with batches $b=\{100,500,1000\}$ \footnote{Similar values were used in the original paper \cite{Sculley}.} ({\bf MB b}), which is particularly known for its efficiency due to the small amount of resources needed to generate its approximation. Additionally, we also present the results associated to the $K$-means$++$ initialization ({\bf $K$M++$\_$init}).

To have a better understanding of BW$K$M, we analyze its performance on a wide variety of well known real datasets (see Table \ref{table:data}) with different scenarios of the clustering problem: size of the dataset, $n$, dimension of the instances, $d$, and number of clusters, $K$.

\begin{table}[H]
\addtolength{\tabcolsep}{1.6pt}
 \begin{minipage}{1\linewidth}
  \begin{center}
\begin{tabular}{|c|r|r|c|}
\hline
{{\bf Dataset}} & {{\bf $n$}} &  {{\bf $d$}}\\ \hline
\it{Corel Image Features (CIF)}  & $68,037$ & $17$\\
\it{3D Road Network (3RN)}  & $434,874$ & $3$\\
\it{Gas Sensor (GS)}  & $4,208,259$ & $19$\\
\it{SUSY}  & $5,000,000$ & $19$\\
\it{Web Users Yahoo! (WUY)}  & $45,811,883$ & $5$\\
\hline
\end{tabular} \quad
\vspace{-0.15cm}
\caption{Information of the datasets.}\label{table:data}
\end{center}
\end{minipage}
\end{table}

The considered datasets have different features, ranging from small datasets with large dimensions ({\it CIF}) to large datasets with small dimensions ({\it WUY}). For each dataset, we have considered a different number of clusters, $K=\{3,9,27\}$. Given the random nature of the algorithms, each experiment has been repeated 40 times for each dataset and each $K$ value.

\begin{figure*}[b!]
\begin{center}
 \includegraphics[width=0.80\textwidth,height=0.28\textwidth]{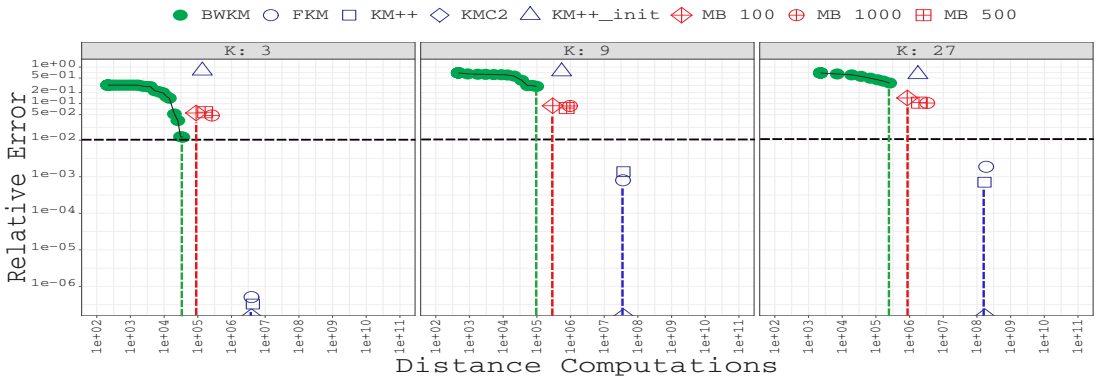}
    \caption{Distance computations {\it vs} relative error on the {\it CIF} dataset}
    \label{fig:ProBind}
\end{center}
\end{figure*}

\begin{figure*}
\begin{center}
  \includegraphics[width=0.80\textwidth,height=0.28\textwidth]{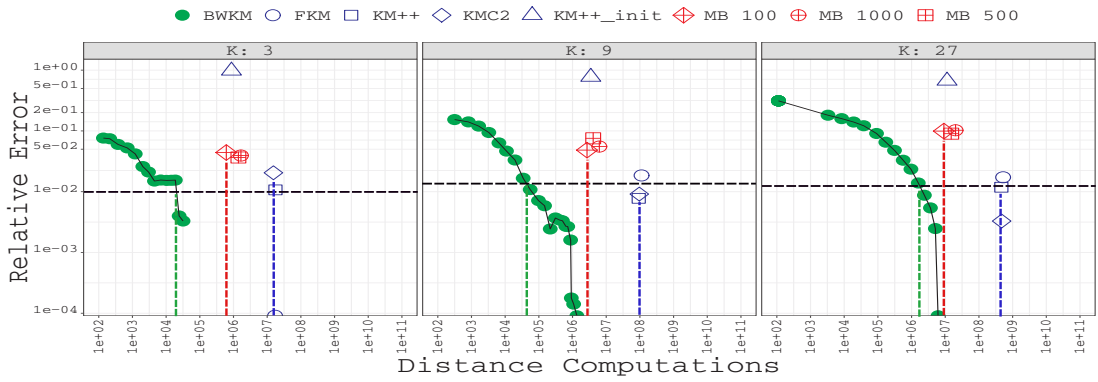}
    \caption{Distance computations {\it vs} relative error on the {\it 3RN} dataset}
    \label{fig:ProBind2}
\end{center}
\end{figure*}

In order to illustrate the competitiveness of our proposal, for each domain, we have limited its maximum number of distance computations to the minimum number required by the set of selected benchmark algorithms in all the 40 runs. Note that this distance bound is expected to be set by MB 100, since, among the methods that we consider, it is the one that uses the lowest number of representatives. 

However, BW$K$M can converge before reaching such a distance bound when the corresponding boundary is empty. In this case, we can guarantee that the obtained set of centroids is a fixed point of the weighted Lloyd's algorithm for any thinner partition of the dataset and, therefore, it is also a fixed point of Lloyd's algorithm on the entire dataset $D$ (see Theorem \ref{lemma:wellassigned2}).

The $K$-means error function (Eq.\ref{eq:errorfunction}) strongly depends on the different characteristics of the clustering problem: $n$, $K$, $d$ and the dataset itself. Thus, in order to compare the performance of the algorithms for different problems, we have decided to use the average of the relative error with respect to the best solution found at each repetition of the experiment:
\vspace{-0.10cm}
\begin{equation}
\hat{E}_{M}=  \frac{E_{M}-\min\limits_{M' \in \mathcal{M}} E_{M'}}{\min\limits_{M' \in \mathcal{M}} E_{M'}} \label{eq:RELERR}
\end{equation}
\vspace{-0.15cm}

{\setlength{\parindent}{0cm}
where $\mathcal{M}$ is the set of algorithms being compared and $E_{M}$ stands for the $K$-means error obtained by method $M \in \mathcal{M}$. That is, the quality of the approximation obtained by an algorithm $M \in \mathcal{M}$ is $100 \cdot \hat{E}_{\texttt{M}} \%$ worse than the best solution found by the set of algorithms considered.}

In Fig. \ref{fig:ProBind}-\ref{fig:ProBind5}, we show the trade-off between the average number of distances computed {\it vs} the average relative error for all the algorithms. Observe that a single symbol is used for each algorithm, except for BW$K$M, in which we compute the trade-off at each iteration so as to observe the evolution of the quality of its approximation as the number of computed distances increases.  Since the number of BW$K$M iterations required to reach the stopping criteria may differ at each execution, we plot the average of the most significant ones, i.e., those that do not exceed the upper limit of the  $95\%$ confidence interval of the total number of BW$K$M iterations for each run.
 
\setlength\belowcaptionskip{-3ex}

\begin{figure*}
\begin{center}
 
    \includegraphics[width=0.80\textwidth,height=0.28\textwidth]{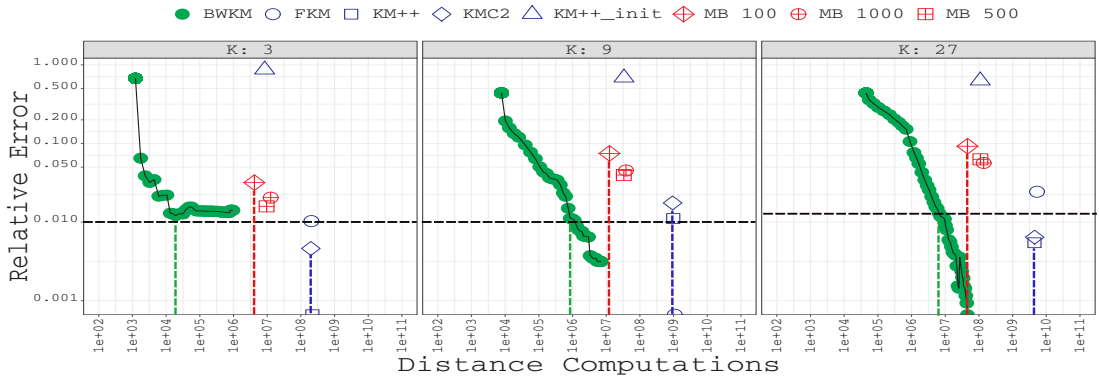}
    \caption{Distance computations {\it vs} relative error on the {\it GS} dataset}
    \label{fig:ProBind3}
\end{center}
\end{figure*}

\begin{figure*}
\begin{center}
    \includegraphics[width=0.80\textwidth,height=0.28\textwidth]{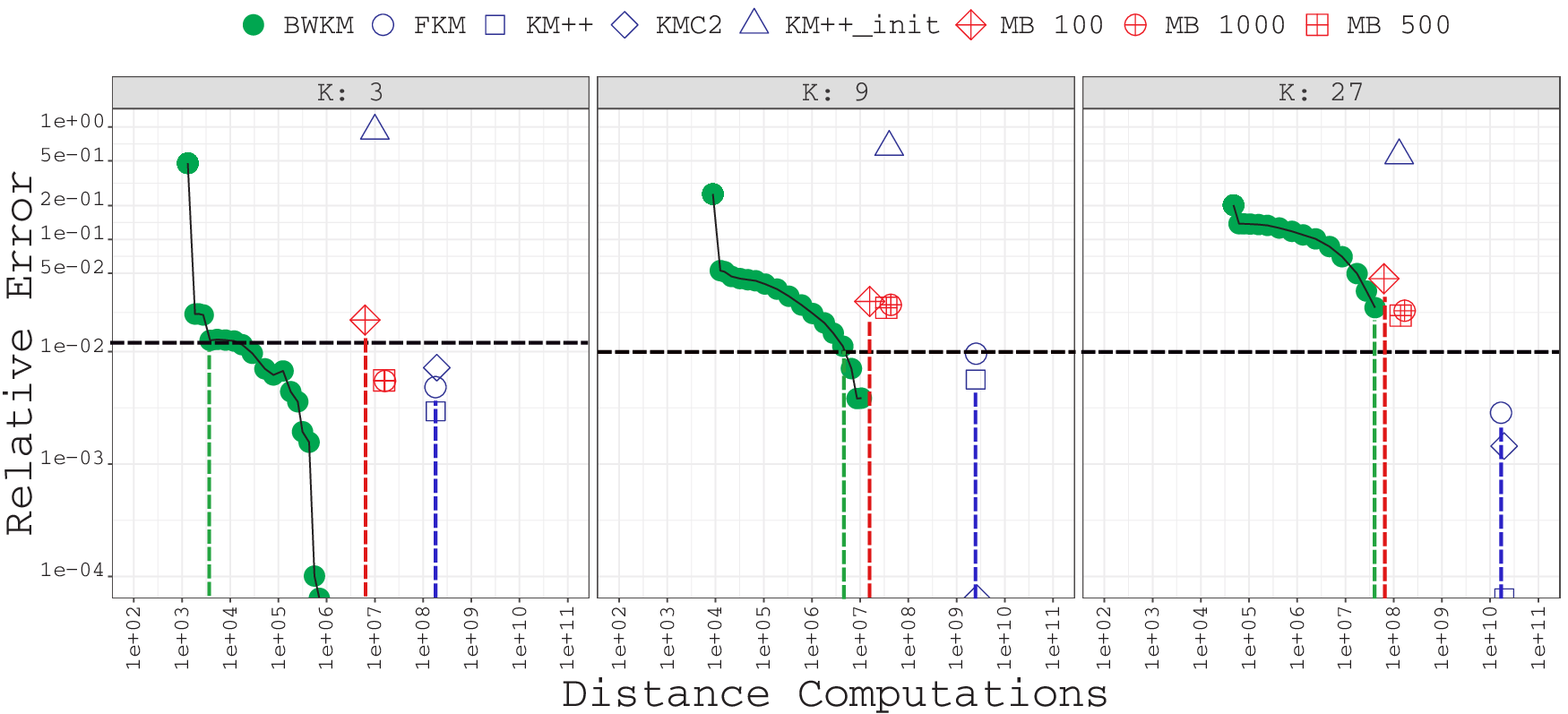}
    \caption{Distance computations {\it vs} relative error on the {\it SUSY} dataset}
    \label{fig:ProBind4}
\end{center}
\end{figure*}

\begin{figure*}
\begin{center}
  \includegraphics[width=0.80\textwidth,height=0.28\textwidth]{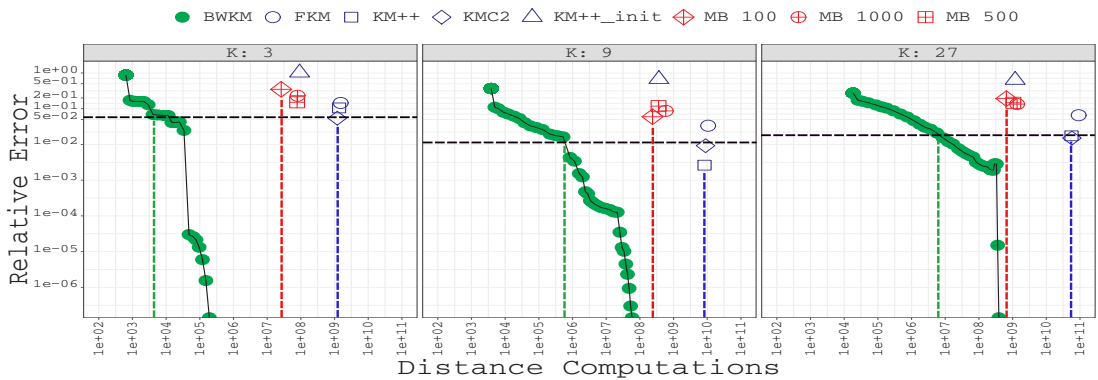}
    \caption{Distance computations {\it vs} relative error on the {\it WUY} dataset}
    \label{fig:ProBind5}
\end{center}
\end{figure*}

In order to ease the visualization of the results, both axis of each figure are in logarithmic scale. Moreover, we delimit with a horizontal dashed black line the regions of BW$K$M that are under $1\%$ of error with respect to the best found solution for the competition (Lloyd's algorithm based methods and MB) 
On one hand, the vertical green dashed line indicates the amount of distance computations required by BW$K$M to achieve such an error, when that happens, otherwise it shows the amount of distance computations at its last iteration. On the other hand, the blue and red vertical dashed lines show the algorithms among the Lloyd's algorithm based methods and MB that computed the least amount of distances, respectively. 

At first glance, we observe that, in $7$ out of $15$ different configurations of datasets and $K$ values, BW$K$M obtains the best (average) solution among the considered methods. It must be highlighted that such a clustering is achieved while computing a much reduced number of distances: up to 2 and 4 orders of magnitude of distances less than MB and the Lloyd's based methods, respectively. Moreover, BW$K$M quite frequently (in 12 out of 15 cases) generated solutions that reached, at least, $1\%$ of error with respect to the best solution found among the competitors (black dashed line). In particular and as expected, the best performance of BW$K$M seems to occur on large datasets with small dimensions ({\it WUY}). On one hand, the decrease in the amount of distances computed is mainly due to the reduction in the number of representatives that BW$K$M uses in comparison to the actual size of the dataset. On the other hand, given a set of points as the dimension decreases, the number of blocks required to obtain a partition completely well assigned tends to decrease ({\it WUY} and {\it 3RN}).

Regardless of this, even when considering the most unfavorable setting considered for BW$K$M (small dataset size and large dimensions, e.g., {\it CIF}), for small $K$ values, our proposal still managed to converge to competitive solutions at a fast rate. Note that for small $K$ values, since the number of centroids is small, one may not need to reduce the diagonal of the blocks so abruptly to verify the well assignment criterion.

Next, we will discuss in detail the results obtained for each of the considered databases. 

In the case of {\it CIF}, which is the smallest dataset and has a high dimensionality, BW$K$M behaves similarly to MB. It gets its best results for $K=3$, where it reaches $1\%$ of relative error with respect to the best solution found among the competitors, while reducing over 2 orders of magnitude of distances with respect to the Lloyd's based methods. For $K=\{9,27\}$, BW$K$M improves the results of $K$M++$\_$init using a much lower number of distances computed. 

In the case of small datasets with low dimensionality ({\it 3RN}), BW$K$M performs much better in comparison to the previous case: for $K \in \{9,27\}$, it actually generates the most competitive solutions. Moreover, in order to achieve a relative error of $1\%$ with respect to the best solution found by the benchmark algorithms, our algorithm reduces between 1 to 2 orders of magnitude of distances with respect to MB, and around 3 orders of magnitude against the Lloyd's based methods.

If we consider the case of the medium to large datasets with hight dimensionality ({\it GS} and {\it SUSY}), in order to reach a $1\%$ relative error, BW$K$M needs up to $3$ orders of magnitude less than MB and from $2$ to $5$ orders less than the Lloyd's based methods. Moreover, BW$K$M obtains the best results in $2$ out of $6$ configurations requiring $2$ order of magnitude less than the Lloyd's based algorithms. 

For the largest dataset with low dimension ({\it WUY}), BW$K$M got its best performance: Regardless of the number of clusters $K$, BW$K$M generated the most competitive solutions. Furthermore, in order to achieve a solution with an error $1\%$ higher than the best of the Lloyd's algorithm, BW$K$M requires  to compute an amount of distance from $2$ to $4$ and 4 to over 5 order of magnitude lower than MB and the Lloyd's based algorithms, respectively. 

Finally, we would like to highlight that BW$K$M, already at its first iterations, reaches a relative error much lower than $K$M++$\_$init in all the configurations requiring to compute an amount of distances from $3$ to $5$ order of magnitude lower. 
This fact strongly motivates the use of BW$K$M  as a competitive initialization strategy for Lloyd's algorithm. 

\section{Conclusions} \label{Sec:NextSteps}

In this work, we have presented an alternative to the $K$-means algorithm, oriented to massive data problems, called the Boundary Weighted $K$-means algorithm (BW$K$M). This approach recursively applies a weighted version of the $K$-means algorithm over a sequence of spatial based partitions of the dataset that ideally contains a large amount of {\it well assigned blocks}, i.e., cells of the spatial partition that only contain instances with the same cluster affiliation. It can be shown that our weighted error approximates the $K$-means error function, as we increase the number of well assigned blocks, see Theorem \ref{thm:ErrBound2}. Ultimately, if all the blocks of a spatial partition are well assigned at the end of a BW$K$M step, then the obtained clustering is actually a fixed point of the $K$-means algorithm, which is generated after using only a small number of representatives in comparison to the actual size of the dataset (Theorem \ref{lemma:wellassigned2}). Furthermore, if, for a certain step of BW$K$M, this property can be verified at consecutive weighted Lloyd's iterations, then the error of our approximation also decreases monotonically (Theorem \ref{lemma:wellassigned}).

In order to achieve this, in Section \ref{SubSec:DetectBound}, we designed a criterion to determine those blocks that may not be well assigned. One of the major advantages of the criterion is its low computational cost: It only uses information generated by the weighted $K$-means algorithm -distances between the center of mass of each block and the set of centroids- and a feature of the corresponding spatial partition -diagonal length of each block-. This allows us to guarantee that, even in the worst possible case, BW$K$M does not have a computational cost higher than that of the $K$-means algorithm. In particular, the criterion is presented in Theorem \ref{thm:FronteraGratis1} and states that, if the diagonal of a certain block is smaller than half the difference of the two the smallest distances between its center of mass and the set of centroids, then the block is well assigned. 

In addition to all the theoretical guarantees that motivated and justify our algorithm (see Section \ref{Sec:Contribution} and Appendix \ref{App:AppendixA}), in practice, we have also observed its competitiveness with respect to the state-of-the-art (Section \ref{Sec:Experimental}). BW$K$M has been compared to techniques known for the quality of their approximation (Lloyd's algorithm initialized with Forgy's approach, $K$-means++ and via an approximation of the $K$-means++ probability function based on a Markov chain Monte Carlo sampling). Besides, it has been compared to Minibatch $K$-means, a method known for the small amount of computational resources that it needs for approximating the solution of the $K$-means problem. 

The results, on different well known real datasets, show that BW$K$M in several cases (7 out of 15 configurations) has generated the most competitive solutions. Furthermore, in 12 out of 15 cases, BW$K$M has converged to solutions with a relative error of under $1\%$ with respect to the best solution found by the state-of-the-art, while using a much smaller amount of distance computations (from 2 to 6 orders of magnitude lower).

As for the next steps, we plan to exploit different benefits of BW$K$M. First of all, observe that the proposed algorithm is embarrassingly parallel up to the $K$-means++ seeding of the initial partition (over a very tiny amount of representatives when compared to the dataset size), hence we could implement this approach in a more appropriate platform for this kind of problems, as is the case of {\it Apache Spark}. Moreover, we must point out that BW$K$M is also compatible with the distance pruning techniques presented in \cite{Drake,Elkan,Hamerly}, therefore, we could also implement these techniques within the weighted Lloyd framework of BW$K$M and reduce, even more, the number of distance computations. 

\appendix \label{App:AppendixA}

In the first result, we present a complimentary property of the grid based RP$K$M proposed in \cite{Capo}. Each iteration of the RP$K$M can be proved to be a coreset with an exponential decrease in the error with respect to the number of iterations. This result could actually bound the BW$K$M error, if we fix $i$ as the minimum number of cuts that a block, of a certain partition generated by BW$K$M, $\mathcal{P}$, has.

\begin{thm}\label{theo:coreset}
Given a set of points $D$ in $\mathbb{R}^d$, the $i$-th iteration of the grid 
based RP$K$M produces a $(K,\varepsilon)$-coreset with 
$\varepsilon=\frac{1}{2^{i-1}}\cdot(1+\frac{1}{2^{i+2}} \cdot \frac{n-1}{n})\cdot \frac{n \cdot l^2}{OPT}$,
where $OPT=\min\limits_{C \subseteq \mathbb{R}^d, |C|=K} E^{D}(C)$ and $l$ the length 
of the diagonal of the smallest bounding box containing $D$.
\end{thm}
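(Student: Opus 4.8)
The plan is to show directly that the weighted set $W=\{\overline{P}:P\in\mathcal{P}\}$ with weights $w(\overline{P})=|P|$, produced at the $i$-th grid iteration, satisfies $|E^{\mathcal{P}}(C)-E^{D}(C)|\leq\varepsilon\cdot E^{D}(C)$ for every $C$, since $F^{W}(C)=E^{\mathcal{P}}(C)$ by definition. Two facts will drive everything. First, after $i$ halvings of every side all blocks are congruent and share the diagonal $l_{B}=l/2^{i}$, so the diameter of each $P=B(D)$ is at most $l_{B}$ and $\|\overline{P}-\textbf{x}\|\leq l_{B}$ for all $\textbf{x}\in P$. Second, I would use the parallel axis identity $\sum_{\textbf{x}\in P}\|\textbf{x}-\textbf{c}\|^{2}=\sum_{\textbf{x}\in P}\|\textbf{x}-\overline{P}\|^{2}+|P|\cdot\|\overline{P}-\textbf{c}\|^{2}$, valid for any $\textbf{c}$, together with the pairwise-distance variance bound $\sum_{\textbf{x}\in P}\|\textbf{x}-\overline{P}\|^{2}=\frac{1}{2|P|}\sum_{\textbf{x},\textbf{y}\in P}\|\textbf{x}-\textbf{y}\|^{2}\leq\frac{|P|-1}{2}\,l_{B}^{2}$, whose total is $V:=\sum_{P}\sum_{\textbf{x}\in P}\|\textbf{x}-\overline{P}\|^{2}\leq\frac{(n-1)\,l^{2}}{2^{2i+1}}$ using $\sum_{P}(|P|-1)\leq n-1$. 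I would then treat the two sides of the absolute value separately, block by block.

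For the easy direction, since $\textbf{c}_{\textbf{x}}$ is the closest centroid to $\textbf{x}$ one has $\|\textbf{x}-\textbf{c}_{\textbf{x}}\|^{2}\leq\|\textbf{x}-\textbf{c}_{\overline{P}}\|^{2}$; summing over $P$ and applying the parallel axis identity with $\textbf{c}=\textbf{c}_{\overline{P}}$ cancels the weighted term and leaves $E^{D}(C)-E^{\mathcal{P}}(C)\leq V\leq\frac{(n-1)l^{2}}{2^{2i+1}}$. This is exactly the small, second summand of $\varepsilon\cdot OPT$, so it yields $E^{D}(C)-E^{\mathcal{P}}(C)\leq\varepsilon\cdot E^{D}(C)$ once I observe that $E^{D}(C)\geq OPT$.

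The delicate direction is $E^{\mathcal{P}}(C)-E^{D}(C)$. Here I would start from $\|\overline{P}-\textbf{c}_{\overline{P}}\|\leq\|\overline{P}-\textbf{c}_{\textbf{x}}\|\leq\|\textbf{x}-\textbf{c}_{\textbf{x}}\|+\|\overline{P}-\textbf{x}\|$, square it, and sum over $\textbf{x}\in P$; the square of the first term cancels against $E^{D}$, the square of the last term reproduces the variance, and the cross term leaves $2\sum_{\textbf{x}\in P}\|\textbf{x}-\textbf{c}_{\textbf{x}}\|\,\|\overline{P}-\textbf{x}\|$. Bounding $\|\overline{P}-\textbf{x}\|\leq l_{B}$ and summing over all blocks gives $E^{\mathcal{P}}(C)-E^{D}(C)\leq 2\,l_{B}\sum_{\textbf{x}\in D}\|\textbf{x}-\textbf{c}_{\textbf{x}}\|+V$, and Cauchy--Schwarz turns the first term into $2\,l_{B}\sqrt{n\,E^{D}(C)}$. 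The hard part is precisely that this cross term scales like $\sqrt{E^{D}(C)}$ rather than linearly, which is what forces the factor $\frac{n\,l^{2}}{OPT}$ into $\varepsilon$: I would convert it into a relative bound by verifying the inequality $2\,l_{B}\sqrt{n\,E^{D}(C)}\leq\frac{n\,l^{2}}{2^{i-1}}\cdot\frac{E^{D}(C)}{OPT}$, which after substituting $l_{B}=l/2^{i}$ reduces to $OPT\leq l\sqrt{n\,E^{D}(C)}$; since $E^{D}(C)\geq OPT$ this follows from the single a priori estimate $OPT\leq n\,l^{2}$, itself obtained by placing one centroid at the global center of mass $\overline{D}$ and using $\|\textbf{x}-\overline{D}\|\leq l$.

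Finally I would assemble the pieces: in both directions the discrepancy is at most $\big(\frac{n\,l^{2}}{2^{i-1}}+\frac{(n-1)l^{2}}{2^{2i+1}}\big)\frac{E^{D}(C)}{OPT}$, where the ratio $\frac{E^{D}(C)}{OPT}\geq 1$ is attached to the variance term as well; this expression is exactly $\varepsilon\cdot E^{D}(C)$ with $\varepsilon=\frac{1}{2^{i-1}}\big(1+\frac{1}{2^{i+2}}\cdot\frac{n-1}{n}\big)\frac{n\,l^{2}}{OPT}$, which proves the coreset claim. The only genuinely non-routine step is the conversion of the linear-in-distance cross term into a relative error through $OPT\leq n\,l^{2}$; the remaining ingredients are the parallel axis identity and the pairwise-distance variance bound already exploited for Theorem~\ref{thm:ErrBound2}.
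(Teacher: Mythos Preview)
Your argument is correct but follows a genuinely different route from the paper's proof. The paper handles both directions of the absolute value at once by writing $|E^{D}(C)-E^{\mathcal{P}}(C)|\leq\sum_{\textbf{x}}|\|\textbf{x}-\textbf{c}_{\textbf{x}}\|^{2}-\|\textbf{x}'-\textbf{c}_{\textbf{x}'}\|^{2}|$ (with $\textbf{x}'=\overline{P}$), factoring as a difference times a sum, bounding the difference by $\|\textbf{x}-\textbf{x}'\|$ via the triangle inequality, and then using the crude pointwise estimate $\|\textbf{x}-\textbf{c}_{\textbf{x}}\|\leq l$ to get the absolute bound $(\frac{n-1}{2^{2i+1}}+\frac{n}{2^{i-1}})l^{2}$; only at the very end is this divided by $OPT\leq E^{D}(C)$ to obtain the relative form. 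You instead split the two directions, use the parallel axis identity for the easy side, and for the hard side replace the pointwise bound on $\|\textbf{x}-\textbf{c}_{\textbf{x}}\|$ by Cauchy--Schwarz together with the a priori estimate $OPT\leq n\,l^{2}$.

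What each approach buys: the paper's argument is shorter and avoids treating the two signs separately, but its step $\|\textbf{x}-\textbf{c}_{\textbf{x}}\|\leq l$ tacitly assumes that at least one centroid lies in the bounding box of $D$, which is not part of the coreset definition (``for all sets of centroids $C$''). Your Cauchy--Schwarz route sidesteps this restriction entirely, so your proof actually delivers the stated $(K,\varepsilon)$-coreset inequality for \emph{every} $C\subset\mathbb{R}^{d}$, at the cost of the extra conversion step through $OPT\leq n\,l^{2}$. Both arguments share the variance ingredient $\sum_{\textbf{x}\in P}\|\textbf{x}-\overline{P}\|^{2}\leq\frac{|P|-1}{2}l_{B}^{2}$ and arrive at the identical numerical $\varepsilon$.
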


\begin{proof}
Firstly, we denote by $\textbf{x}'$ to the representative of $\textbf{x}\in D$ at the $i$-th grid based RP$K$M iteration, i.e., 
if $\textbf{x}\in P$ then $\textbf{x}'=\overline{P}$, where $P$ is a block of the corresponding dataset partition $\mathcal{P}$ of $D$. Observe that, at the $i$-th grid based RP$K$M iteration, the length of the diagonal of each cell is $\frac{1}{2^i}\cdot l$ and we set a positive constant, $c$, as the positive real number satisfying $\frac{1}{2^i}\cdot l=\sqrt{c \cdot \frac{OPT}{n}}$. By the triangular inequality, we have

 \begin{align}
|E^{D}&(C)-E^{\mathcal{P}}(C)|\leq \sum\limits_{\textbf{x}\in D} |\| \textbf{x}-\textbf{c}_{\textbf{x}}\|^2-\| \textbf{x}'-\textbf{c}_{\textbf{x}'}\|^2|\nonumber\\
&\leq \sum\limits_{\textbf{x}\in D} |(\| \textbf{x}-\textbf{c}_{\textbf{x}}\|-\| \textbf{x}'-\textbf{c}_{\textbf{x}'}\|)(\| \textbf{x}-\textbf{c}_{\textbf{x}}\|+\| \textbf{x}'-\textbf{c}_{\textbf{x}'}\|)|\nonumber
\end{align}

 Analogously, observe that the following inequalities hold $\| \textbf{x}'-\textbf{c}_{\textbf{x}'}\|+\|\textbf{x}-\textbf{x}'\|\geq \| \textbf{x}-\textbf{c}_{\textbf{x}}\|$
and $\| \textbf{x}-\textbf{c}_{\textbf{x}}\|+\|\textbf{x}-\textbf{x}'\|\geq \| \textbf{x}'-\textbf{c}_{\textbf{x}'}\|$. Thus, $\|\textbf{x}-\textbf{x}'\|\geq |\| \textbf{x}-\textbf{c}_{\textbf{x}}\|-\| \textbf{x}'-\textbf{c}_{\textbf{x}'}\||$:

 \begin{eqnarray}
|E^{D}(C)-E^{\mathcal{P}}(C)|\leq \sum\limits_{\textbf{x}\in D} \|\textbf{x}-\textbf{x}'\| \cdot (2\cdot \| \textbf{x}-\textbf{c}_{\textbf{x}}\|+\|\textbf{x}-\textbf{x}'\|)\nonumber
\end{eqnarray}

On the other hand, we know that $\sum\limits_{\textbf{x}\in D} \|\textbf{x}-\textbf{x}'\|^2 \leq \frac{n-1}{2^{2i+1}}\cdot l^{2}$ and that, as both $\textbf{x}$ and $\textbf{x}'$ must be located in the same cell, $\|\textbf{x}-\textbf{x}'\|\leq \frac{1}{2^i}\cdot l$. Therefore, as ${\bf d}(\textbf{x},C) \leq l$,

 \begin{eqnarray}
|E^{D}(C)-E^{\mathcal{P}}(C)|&\leq& (\frac{n-1}{2^{2i+1}}+\frac{n}{2^{i-1}})\cdot l^{2} \nonumber\\
&\leq& (\frac{n-1}{2^{2i+1}}+\frac{n}{2^{i-1}}) \cdot 2^{2i} \cdot c \cdot \frac{OPT}{n}\nonumber\\
&\leq& (\frac{1}{2^{i+2}} \cdot \frac{n-1}{n}+1) \cdot 2^{i+1} \cdot c \cdot E(C) \nonumber
\end{eqnarray}

In other words, the $i$-th RP$K$M iteration is a $(K,\varepsilon)$- coreset with 
$\varepsilon=(\frac{1}{2^{i+2}} \cdot \frac{n-1}{n}+1) \cdot 2^{i+1} \cdot c=\frac{1}{2^{i-1}}\cdot(1+\frac{1}{2^{i+2}} \cdot \frac{n-1}{n})\cdot \frac{n \cdot l^2}{OPT}$.
 \end{proof}
 
 The two following results show some properties of the error function when having well assigned blocks. 
 
\begin{lemma}\label{theo:wellassigned}
If $\textbf{c}_{\textbf{x}}=\textbf{c}_{\overline{P}}$ and $\textbf{c}_{\textbf{x}}'=\textbf{c}_{\overline{P}}'$ for all $\textbf{x} \in P$, where $P \subseteq D$ and $C$, $C'$ are a  pair of sets of centroids, then $E^{P}(C)-E^{\{ P \}}(C)=E^{P}(C')-E^{\{ P \}}(C')$.
\end{lemma}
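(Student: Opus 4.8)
The plan is to reduce each side of the claimed identity to one and the same quantity that does not depend on the set of centroids at all, namely the total squared dispersion of $P$ about its own center of mass $\overline{P}$. The main tool will be the standard variance decomposition (parallel axis theorem): for any fixed point $\textbf{c} \in \mathbb{R}^d$,
\[
\sum_{\textbf{x} \in P} \|\textbf{x} - \textbf{c}\|^2 = \sum_{\textbf{x} \in P} \|\textbf{x} - \overline{P}\|^2 + |P| \cdot \|\overline{P} - \textbf{c}\|^2,
\]
which follows by writing $\textbf{x} - \textbf{c} = (\textbf{x} - \overline{P}) + (\overline{P} - \textbf{c})$, expanding the squared norm, and discarding the cross term.

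First I would exploit the well-assignment hypothesis to collapse the per-point assignment appearing in $E^{P}(C)$. Since $\textbf{c}_{\textbf{x}} = \textbf{c}_{\overline{P}}$ for every $\textbf{x} \in P$, the error $E^{P}(C) = \sum_{\textbf{x} \in P} \|\textbf{x} - \textbf{c}_{\textbf{x}}\|^2$ equals $\sum_{\textbf{x} \in P} \|\textbf{x} - \textbf{c}_{\overline{P}}\|^2$, i.e.\ a sum against the single constant centroid $\textbf{c}_{\overline{P}}$. Applying the decomposition above with $\textbf{c} = \textbf{c}_{\overline{P}}$, and recalling that by definition of the weighted error $E^{\{P\}}(C) = |P| \cdot \|\overline{P} - \textbf{c}_{\overline{P}}\|^2$, the term $|P| \cdot \|\overline{P} - \textbf{c}_{\overline{P}}\|^2$ cancels and I obtain
\[
E^{P}(C) - E^{\{P\}}(C) = \sum_{\textbf{x} \in P} \|\textbf{x} - \overline{P}\|^2.
\]

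The decisive observation is that the right-hand side no longer mentions $C$. Running the identical argument for the second set of centroids, using that $\textbf{c}_{\textbf{x}}' = \textbf{c}_{\overline{P}}'$ for all $\textbf{x} \in P$, gives $E^{P}(C') - E^{\{P\}}(C') = \sum_{\textbf{x} \in P} \|\textbf{x} - \overline{P}\|^2$ as well, and the two differences coincide. There is no substantial obstacle here; the only point requiring care is the vanishing of the cross term $2\,(\overline{P} - \textbf{c}) \cdot \sum_{\textbf{x} \in P}(\textbf{x} - \overline{P})$ in the decomposition, which holds precisely because $\overline{P}$ is the exact arithmetic mean of $P$, so that $\sum_{\textbf{x} \in P}(\textbf{x} - \overline{P}) = \textbf{0}$. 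Everything else is the routine expansion of a squared norm together with linearity of the sum.
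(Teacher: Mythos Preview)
Your proof is correct and follows essentially the same route as the paper: both arguments reduce $E^{P}(C)-E^{\{P\}}(C)$ to the centroid-independent quantity $\sum_{\textbf{x}\in P}\|\textbf{x}-\overline{P}\|^2$ via the parallel-axis identity $\sum_{\textbf{x}\in P}\|\textbf{x}-\textbf{c}\|^2 = \sum_{\textbf{x}\in P}\|\textbf{x}-\overline{P}\|^2 + |P|\,\|\overline{P}-\textbf{c}\|^2$. The only difference is that the paper invokes this identity by citing an external lemma (Lemma~1 in \cite{Capo}) stating that $f(\textbf{c})=|P|\,\|\overline{P}-\textbf{c}\|^2-\sum_{\textbf{x}\in P}\|\textbf{x}-\textbf{c}\|^2$ is constant, whereas you derive it directly from the vanishing cross term, which makes your version self-contained.
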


\begin{proof}
From Lemma 1 in \cite{Capo}, we can say that the following function is constant
$f(\textbf{c})= |P|\cdot\| \overline{P}- \textbf{c} \|^2 - \sum_{\textbf{x} \in P} \| \textbf{x}- \textbf{c} \|^2 $, for $\textbf{c} \in \mathbb{R}^d$.
In particular, since $f(\overline{P})=- \sum_{\textbf{x} \in P} \| \textbf{x}- \overline{P} \|^2$, we have that
$|P|\cdot\| \overline{P}- \textbf{c}_{\overline{P}} \|^2  =\sum_{\textbf{x} \in P} \| \textbf{x}- \textbf{c}_{\overline{P}} \|^2 - \sum_{\textbf{x} \in P} \| \textbf{x}- \overline{P} \|^2$
and so we can express the weighted error of a dataset partition, $\mathcal{P}$, as follows

\begin{eqnarray}\label{EQ2}
E^{\mathcal{P}}(C)= \sum\limits_{P \in \mathcal{P}} \sum_{\textbf{x} \in P} \| \textbf{x}- \textbf{c}_{\overline{P}} \|^2 - \| \textbf{x}- \overline{P} \|^2
\end{eqnarray}

In particular, for $P \in \mathcal{P}$, we have 

\begin{align}\label{EQ33}
 E^{P}(C)-E^{\{ P \}}(C)&=\sum_{\textbf{x} \in P} \| \textbf{x}- \textbf{c}_{\textbf{x}} \|^2 -\| \textbf{x}- \textbf{c}_{\overline{P}} \|^2 + \| \textbf{x}- \overline{P} \|^2 \nonumber\\
 &= \sum_{\textbf{x} \in P}  \| \textbf{x}- \overline{P} \|^2 \nonumber\\
 &=\sum_{\textbf{x} \in P} \| \textbf{x}- \textbf{c}_{\textbf{x}}' \|^2 -\| \textbf{x}- \textbf{c}_{\overline{P}}'  \|^2 + \| \textbf{x}- \overline{P} \|^2 \nonumber\\
 &= E^{P}(C')-E^{\{ P \}}(C') \nonumber
\end{align}
\end{proof}
 
 In the previous result we observe that, if all the instances are correctly assigned in each block, then the difference of the weighted and the  entire dataset error, of both sets of centroids, is the same. In other words, if all the blocks of a given partition are correctly assigned, not only can we then actually guarantee a monotone descend of the entire error function for our approximation, a property that can not be guaranteed for the typical coreset type approximations of $K$-means, but we know exactly the reduction of such an error after a weighted Lloyd iteration.

\begin{thm}\label{lemma:wellassigned}

Given two set of centroids $C$, $C'$, where $C'$ is obtained after a weighted Lloyd's
iteration (on a partition $\mathcal{P}$) over $C$ and $\textbf{c}_{\textbf{x}}=\textbf{c}_{\overline{P}}$ and
$\textbf{c}_{\textbf{x}}'=\textbf{c}_{\overline{P}}'$ for all $\textbf{x} \in P$ and $P \in \mathcal{P}$, then
$E^{D}(C') \leq E^{D}(C)$.
\end{thm}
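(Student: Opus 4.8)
The plan is to reduce the statement about the full-dataset error to the (standard) monotonicity of the weighted error, using Lemma \ref{theo:wellassigned} as the bridge between the two. First I would apply Lemma \ref{theo:wellassigned} block by block: the hypothesis that $\textbf{c}_{\textbf{x}}=\textbf{c}_{\overline{P}}$ and $\textbf{c}_{\textbf{x}}'=\textbf{c}_{\overline{P}}'$ for every $\textbf{x}\in P$ and every $P\in\mathcal{P}$ is exactly the premise the lemma requires, so for each block we obtain $E^{P}(C)-E^{\{P\}}(C)=E^{P}(C')-E^{\{P\}}(C')$, which I would rearrange into $E^{P}(C)-E^{P}(C')=E^{\{P\}}(C)-E^{\{P\}}(C')$.

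Next I would sum this identity over all $P\in\mathcal{P}$ and exploit additivity over the partition. Since $\mathcal{P}$ partitions $D$, the restricted errors add up to the full error, $\sum_{P\in\mathcal{P}}E^{P}(C)=E^{D}(C)$ (and the same for $C'$), whereas the single-block weighted errors add up to the weighted error, $\sum_{P\in\mathcal{P}}E^{\{P\}}(C)=E^{\mathcal{P}}(C)$ (and the same for $C'$). This produces the key identity $E^{D}(C)-E^{D}(C')=E^{\mathcal{P}}(C)-E^{\mathcal{P}}(C')$, so that the sign of the change in the genuine error coincides with the sign of the change in the cheap-to-compute weighted error.

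It then suffices to prove $E^{\mathcal{P}}(C')\leq E^{\mathcal{P}}(C)$, i.e.\ that a single weighted Lloyd iteration does not increase the weighted error. This is the classical two-step monotonicity argument transported to the weighted objective: holding the assignment induced by $C$ fixed, the update step replaces each centroid by the weighted center of mass of the representatives assigned to it, and since that center of mass minimizes the weighted sum of squared distances within its cluster, the error cannot increase; subsequently reassigning each $\overline{P}$ to its nearest centroid in $C'$ can only decrease it further, because $\|\overline{P}-\textbf{c}_{\overline{P}}'\|\leq\|\overline{P}-\textbf{c}\|$ for every $\textbf{c}\in C'$. Chaining these two inequalities gives $E^{\mathcal{P}}(C')\leq E^{\mathcal{P}}(C)$, which through the key identity yields $E^{D}(C')\leq E^{D}(C)$.

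The monotonicity of the weighted iteration is routine and mirrors the standard proof for Lloyd's algorithm. The conceptual crux — and the step I expect to carry all the weight — is the blockwise hypothesis: well-assignment is precisely what lets Lemma \ref{theo:wellassigned} transfer the decrease of $E^{\mathcal{P}}$ onto $E^{D}$. Without it the weighted error could still drop while $E^{D}$ fails to, and the chain connecting $E^{D}$ and $E^{\mathcal{P}}$ would break at the summation step.
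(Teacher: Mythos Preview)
Your proof is correct and follows essentially the same route as the paper: apply Lemma \ref{theo:wellassigned} to each block, sum over $\mathcal{P}$ to obtain $E^{D}(C)-E^{D}(C')=E^{\mathcal{P}}(C)-E^{\mathcal{P}}(C')$, and then invoke the monotonicity of a weighted Lloyd iteration. The only difference is cosmetic: the paper cites an external reference for the weighted-Lloyd monotonicity, whereas you spell out the standard assignment/update two-step argument explicitly.
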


\begin{proof}
Using Lemma \ref{theo:wellassigned} over all the subsets $P \in \mathcal{P}$, we know that
$E^{D}(C')-E^{D}(C)=\sum_{P \in \mathcal{P}} (E^{P}(C')-E^{P}(C))$ \
$= \sum_{P \in \mathcal{P}} (E^{\{ P \}}(C')-E^{\{ P \}}(C))=E^{\mathcal{P}}(C')-E^{\mathcal{P}}(C)$.
Moreover, from the chain of inequalities $A.1$ in \cite{Capo}, we know that $E^{\mathcal{P}}(C')\leq E^{\mathcal{P}}(C)$
at any weighted Lloyd iteration over a given partition $\mathcal{P}$, thus 
$E^{D}(C')\leq E^{D}(C)$.
 \end{proof}

 In Theorem \ref{thm:FronteraGratis1}, we prove the cutting criterion that we use in BW$K$M. It consists of an inequality that, only by using information referred to the partition of the dataset and the weighted Lloyd's algorithm, helps us guarantee that a block is well assigned.
 
{\renewcommand\footnote[1]{}\primetheo*}

\begin{proof}
 From the triangular inequality, we know that $\| \textbf{x}-\textbf{c}_{\overline{P}}\| \leq \| \textbf{x}-\overline{P}\|+\| \overline{P}-\textbf{c}_{\overline{P}}\|$. Moreover, observe that $\overline{P}$ is contained in the block $B$, since $B$ is a convex polygon. Then $\| \textbf{x}-\overline{P}\| < l_{B}$.
 
For this reason, $\| \textbf{x}-\textbf{c}_{\overline{P}}\| < l_{B} - \delta_{P}(C) + \| \overline{P}-\textbf{c}\|< (2\cdot l_{B} - \delta_{P}(C)) + \| \textbf{x}-\textbf{c}\|$ holds.
As $\epsilon_{C,D}(B)=\max\{0,2\cdot l_{B}-\delta_{P}(C)\}=0$, then
$2\cdot l_{B}-\delta_{P}(C) \leq 0$ and, therefore,
 $\| \textbf{x}-\textbf{c}_{\overline{P}}\| < \| \textbf{x}-\textbf{c}\|$ for all $\textbf{c} \in C$. In other words,
 $\textbf{c}_{\overline{P}}=\argmin\limits_{\textbf{c} \in C} \| \textbf{x}-\textbf{c}\|$ for all $\textbf{x} \in P$.
\end{proof}

As can be seen in Section \ref{SubSec:InitialPartition}, there are different parameters that must be tuned. In the following result, we set a criterion to choose the initialization parameters of Algorithm \ref{alg:InitialPartition} in a way that its complexity, even in the worst case scenario, is still the same as that of Lloyd's algorithm. 

\begin{thm}\label{thm:ComplexityIn}
Given an integer $r$, if $m=\mathcal{O}(\sqrt{K \cdot d})$ 
and $s=\mathcal{O}(\sqrt{n})$,
then Algorithm \ref{alg:InitialPartition} is $\mathcal{O}(n\cdot K \cdot d)$.
\end{thm}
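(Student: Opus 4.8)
The plan is to reduce the statement to a direct substitution into the worst-case cost of Algorithm \ref{alg:InitialPartition} that was already established in Section \ref{ss:Comp}. There the running time of the initialization is bounded by $\mathcal{O}\big(\max\{\, r\cdot s\cdot m^{2},\; r\cdot K\cdot d\cdot m^{2},\; n\cdot\max\{m,d\}\,\}\big)$, whose three arguments are, respectively, the cost of \texttt{Step 1} (Algorithm \ref{alg:StartingPartition}), of the $r$ repeated weighted $K$-means++ runs in \texttt{Step 2} (Algorithm \ref{alg:computeProb}), and of building the induced dataset partition in \texttt{Step 5}. The factor $m^{2}$ in the first two terms reflects that, in the worst case, each pass of the \texttt{while} loop adds a single new block, so the loop may be repeated $\mathcal{O}(m)$ times, while a single pass costs at most $\mathcal{O}(r\cdot s\cdot m)$ and $\mathcal{O}(r\cdot K\cdot d\cdot m)$ (a weighted $K$-means++ seeding plus the evaluation of $\epsilon_{S^i,C^i}(\cdot)$ over $\le m$ representatives). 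Granting this bound, it only remains to verify that each of the three terms is $\mathcal{O}(n\cdot K\cdot d)$ under the hypotheses $m=\mathcal{O}(\sqrt{K\cdot d})$, $s=\mathcal{O}(\sqrt{n})$, and $r=\mathcal{O}(1)$ (the latter being exactly what ``given an integer $r$'' supplies).

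First I would record the two immediate consequences of the hypotheses: $m^{2}=\mathcal{O}(K\cdot d)$ and $s=\mathcal{O}(\sqrt{n})$. Substituting into the first term gives $r\cdot s\cdot m^{2}=\mathcal{O}(\sqrt{n}\cdot K\cdot d)$, which is $\mathcal{O}(n\cdot K\cdot d)$ because $\sqrt{n}\le n$. For the third term, $n\cdot\max\{m,d\}=n\cdot\max\{\mathcal{O}(\sqrt{K\cdot d}),\,d\}$, and since both $\sqrt{K\cdot d}\le K\cdot d$ and $d\le K\cdot d$ (using $K\ge 1$), this is again $\mathcal{O}(n\cdot K\cdot d)$. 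Both bounds hold with no assumption beyond $r=\mathcal{O}(1)$, so these two terms are essentially routine.

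The one term carrying real content is the second, $r\cdot K\cdot d\cdot m^{2}=\mathcal{O}\big((K\cdot d)^{2}\big)$. This equals $\mathcal{O}(n\cdot K\cdot d)$ precisely when $K\cdot d=\mathcal{O}(n)$, so the main (and essentially the only) obstacle is to justify that we operate in this regime. I would argue that this is exactly the massive-data setting the algorithm targets and is already implicit in its design: the initial partition is intended to use far fewer representatives than instances, so $m\le n$, and a meaningful partition into well assigned blocks presupposes on the order of at least $K\cdot d$ instances, i.e. $K\cdot d=\mathcal{O}(n)$. Under this mild hypothesis the second term is $\mathcal{O}(n\cdot K\cdot d)$ as well, and taking the maximum of the three bounds yields the claimed overall cost $\mathcal{O}(n\cdot K\cdot d)$, matching that of Lloyd's algorithm.
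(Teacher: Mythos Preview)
Your proposal is correct and follows essentially the same approach as the paper: substitute the hypotheses into the three worst-case terms from Section~\ref{ss:Comp} and check each is $\mathcal{O}(n\cdot K\cdot d)$, invoking the standing assumption $K\cdot d=\mathcal{O}(n)$ for the second term. The paper's own proof is terser (it simply asserts $K\cdot d=\mathcal{O}(n)$ without the contextual justification you supply) but otherwise identical in structure.
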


\begin{proof}
It is enough to verify the conditions presented before. Firstly, observe that 
$r \cdot s \cdot m^2=\mathcal{O}(\sqrt{n} \cdot K \cdot d)$ and
$n \cdot m=\mathcal{O}(n \cdot \sqrt{K \cdot d})$. Moreover, 
as $K\cdot d= \mathcal{O}(n)$, then
$r \cdot m^2=\mathcal{O}(n)$.
\end{proof}

Up to this point, most of the quality results assume the case when all the blocks are well assigned. However, in order to achieve this, many BW$K$M iterations might be required. In the following result, we provide a bound to the weighted error with respect to the full error. This result shows that our weighted representation improves as more blocks of our partition satisfy the criterion in Algorithm \ref{thm:FronteraGratis1} and/or
the diagonal of the blocks are smaller.

{\renewcommand\footnote[1]{}\sectheo*}
\begin{proof}
Using Eq.\ref{EQ2} in Theorem \ref{theo:wellassigned}, we know that
$|E^{D}(C)-E^{\mathcal{P}}(C)| \leq \sum\limits_{P \in \mathcal{P}} \sum\limits_{\textbf{x} \in P} \| \textbf{x}- \textbf{c}_{\overline{P}} \|^2 -\| \textbf{x}- \textbf{c}_{\textbf{x}} \|^2 + \| \textbf{x}- \overline{P} \|^2$.

Observe that, for a certain instance $\textbf{x} \in P$, where 
$\epsilon_{C,D}(B)=\max\{0,2\cdot l_{B}-\delta_{P}(C)\}=0$,
$\| \textbf{x}- \textbf{c}_{\overline{P}} \|^2 -\| \textbf{x}- \textbf{c}_{\textbf{x}} \|^2=0$, as 
$\textbf{c}_{\textbf{x}}=\textbf{c}_{\overline{P}}$ by Theorem \ref{thm:FronteraGratis1}. On the other hand,
if $\epsilon_{C,D}(B) > 0$,
we have the following inequalities:

 \begin{align}
 \| \textbf{x}- \textbf{c}_{\overline{P}} \| -\| \textbf{x}- \textbf{c}_{\textbf{x}} \|&\leq 2\cdot \| \textbf{x}- \overline{P} \| -(\| \overline{P}- \textbf{c}_{\textbf{x}} \|-\| \overline{P}- \textbf{c}_{\overline{P}} \|) \nonumber\\
 &\leq \epsilon_{C,D}(B) \nonumber
\end{align}

\begin{align}
 \| \textbf{x}- \textbf{c}_{\overline{P}} \| +\| \textbf{x}- \textbf{c}_{\textbf{x}} \|&\leq 2\cdot \| \textbf{x}- \overline{P} \| + \| \overline{P}- \textbf{c}_{\textbf{x}} \|+\| \overline{P}- \textbf{c}_{\overline{P}} \| \nonumber\\
 &< 2\cdot l_{B}+(2\cdot l_{B}+\| \overline{P}- \textbf{c}_{\overline{P}} \|) \nonumber \\
 &+ \| \overline{P}- \textbf{c}_{\overline{P}} \| \nonumber \\
 &= 2\cdot (2\cdot l_{B}+\| \overline{P}- \textbf{c}_{\overline{P}} \|) \nonumber
\end{align}

Using both inequalities, we have 
$\| \textbf{x}- \textbf{c}_{\overline{P}} \|^2 -\| \textbf{x}- \textbf{c}_{\textbf{x}} \|^2 \leq 2 \cdot \epsilon_{C,D}(B) \cdot (2\cdot l_{B}+\| \overline{P}- \textbf{c}_{\overline{P}} \|)$. On the other hand, observe that
$\sum\limits_{\textbf{x} \in P} \| \textbf{x}- \overline{P} \|^2= \frac{1}{|P|} \cdot \sum\limits_{\textbf{x},\textbf{y} \in P} \| \textbf{x}- \textbf{y} \|^2 \leq \frac{1}{|P|} \cdot\frac{|P|\cdot(|P|-1)}{2} \cdot l_{B}^2=\frac{|P|-1}{2} \cdot l_{B}^2$.
 \end{proof}
 
 As we do not have access to the error for the entire dataset, $E^{D}(C)$, since its computation is expensive, in Algorithm \ref{alg:RPKM_A}
 we propose a possible stopping criterion that bounds the displacement of the set of centroids. In the following result, we show a possible 
 choice of this bound in a way that, if the proposed criterion is verified, then the common Lloyd's algorithm stopping criterion
 is also satisfied.
 
 \begin{thm}\label{thm:epsilon}
Given two sets of centroids $C=\{\textbf{c}_{k}\}_{k=1}^{K}$ and $C'=\{\textbf{c}_{k}'\}_{k=1}^{K}$ , if
$\|C-C'\|_{\infty}=\max\limits_{k=1, \ldots,K} \|\textbf{c}_{k}-\textbf{c}_{k}' \|\leq {\varepsilon}_{w}$, where 
${\epsilon}_{w}=\sqrt{l^{2}+\frac{\epsilon^2}{n^2}}-l$, then 
$|E^{D}(C)-E^{D}(C')|\leq \varepsilon$.
\end{thm}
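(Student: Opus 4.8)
The plan is to bound the change in assignment cost point-by-point and then sum over $D$. First I would fix an instance $\textbf{x} \in D$ and let $\textbf{c}_{\textbf{x}}$ (resp. $\textbf{c}'_{\textbf{x}}$) be its nearest centroid in $C$ (resp. $C'$). Writing $k$ for the index with $\textbf{c}_{k} = \textbf{c}_{\textbf{x}}$, I would exploit the fact that the nearest-centroid rule for $C'$ is at least as good as the assignment that reuses index $k$, and combine this with the triangle inequality: $\|\textbf{x}-\textbf{c}'_{\textbf{x}}\| \leq \|\textbf{x}-\textbf{c}'_{k}\| \leq \|\textbf{x}-\textbf{c}_{k}\| + \|\textbf{c}_{k}-\textbf{c}'_{k}\| \leq \|\textbf{x}-\textbf{c}_{\textbf{x}}\| + \varepsilon_{w}$, where the last step uses the hypothesis $\|C-C'\|_{\infty} \leq \varepsilon_{w}$. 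The essential idea is to compare $C'$ against the (suboptimal) assignment inherited from $C$ rather than against its own optimal assignment.

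Next I would square this estimate to get $\|\textbf{x}-\textbf{c}'_{\textbf{x}}\|^2 \leq \|\textbf{x}-\textbf{c}_{\textbf{x}}\|^2 + 2\varepsilon_{w}\|\textbf{x}-\textbf{c}_{\textbf{x}}\| + \varepsilon_{w}^2$, and then invoke the uniform bound $\|\textbf{x}-\textbf{c}_{\textbf{x}}\| \leq l$ (valid because each instance lies in the bounding box of diagonal $l$ and each centroid, being a center of mass, lies in its convex hull, hence in the box). This gives the per-point increment $\|\textbf{x}-\textbf{c}'_{\textbf{x}}\|^2 - \|\textbf{x}-\textbf{c}_{\textbf{x}}\|^2 \leq 2\varepsilon_{w} l + \varepsilon_{w}^2$.

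The key algebraic step is to unfold the definition of the threshold: from $\varepsilon_{w}+l = \sqrt{l^{2}+\varepsilon^2/n^2}$, squaring yields $2\varepsilon_{w} l + \varepsilon_{w}^2 = \varepsilon^2/n^2$. Hence each of the $n$ instances contributes at most $\varepsilon^2/n^2$, and summing gives $E^{D}(C') - E^{D}(C) \leq n \cdot \varepsilon^2/n^2 = \varepsilon^2/n$. Repeating the identical argument with $C$ and $C'$ interchanged (now using $\|\textbf{x}-\textbf{c}'_{\textbf{x}}\| \leq l$) supplies the reverse inequality, so $|E^{D}(C)-E^{D}(C')| \leq \varepsilon^2/n \leq \varepsilon$, where the final step holds since $\varepsilon \ll 1 \leq n$.

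The main obstacle — really the only subtle point — is the assignment mismatch in the first step: because the optimal partition can change between $C$ and $C'$, one cannot naively compare $\|\textbf{x}-\textbf{c}'_{\textbf{x}}\|$ with $\|\textbf{x}-\textbf{c}_{\textbf{x}}\|$ centroid-by-centroid. The fix is to relax to the assignment that reuses $C$'s nearest index for $C'$, which upper-bounds the true $C'$-cost and lets the $\varepsilon_{w}$ perturbation enter cleanly. A secondary item to state carefully is the uniform distance bound $\|\textbf{x}-\textbf{c}_{\textbf{x}}\|,\|\textbf{x}-\textbf{c}'_{\textbf{x}}\| \leq l$, which is what forces the centroids to sit inside the bounding box of $D$; this holds whenever they are centers of mass of subsets of $D$, as is the case throughout BW$K$M.
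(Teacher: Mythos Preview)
Your argument is correct and follows essentially the same route as the paper: relax the nearest-centroid assignment to the inherited index, apply the triangle inequality to obtain $\bigl|\|\textbf{x}-\textbf{c}_{\textbf{x}}\|-\|\textbf{x}-\textbf{c}'_{\textbf{x}}\|\bigr|\leq\varepsilon_w$, use the uniform bound $\|\textbf{x}-\textbf{c}_{\textbf{x}}\|\leq l$, and unfold the definition of $\varepsilon_w$. The only cosmetic difference is that the paper factors $a^2-b^2=(a-b)(a+b)$ and bounds each factor, whereas you square the one-sided estimate directly; both arrive at $n(\varepsilon_w^{2}+2l\varepsilon_w)=\varepsilon^{2}/n\leq\varepsilon$ (the paper's final ``$=\varepsilon$'' is a slip---your explicit $\varepsilon^{2}/n$ is what the computation actually yields).
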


\begin{proof}
Initially, we bound the following terms: $\| \textbf{x}-\textbf{c}_{\textbf{x}}\|+\| \textbf{x}-\textbf{c}_{\textbf{x}}'\|$
and $|\| \textbf{x}-\textbf{c}_{\textbf{x}}\|-\| \textbf{x}-\textbf{c}_{\textbf{x}}'\||$ for any $\textbf{x}\in D$.

If we set $j$ and $t$ as the indexes satisfying
$\textbf{c}_{j}=\textbf{c}_{\textbf{x}}$ and
$\textbf{c}_{t}'=\textbf{c}_{\textbf{x}}'$, then we have
$\| \textbf{x}-\textbf{c}_{\textbf{x}}\|+\| \textbf{x}-\textbf{c}_{\textbf{x}}'\|=\| \textbf{x}-\textbf{c}_{j}\|+\| \textbf{x}-\textbf{c}_{t}'\|\leq \| \textbf{x}-\textbf{c}_{t}\|+\| \textbf{x}-\textbf{c}_{t}'\|\leq 2 \cdot \| \textbf{x}-\textbf{c}_{t}'\|+\varepsilon_{w}= 2 \cdot \| \textbf{x}-\textbf{c}_{\textbf{x}}'\|+\varepsilon_{w}$ (1).
Analogously, applying the triangular inequality, we have
$|\| \textbf{x}-\textbf{c}_{\textbf{x}}\|-\| \textbf{x}-\textbf{c}_{\textbf{x}}'\||\leq \varepsilon_{w}$ (2).
In the following chain of inequalities, we will make use of (1) and (2):

\begin{eqnarray}\label{ChainEps}
|E^{D}(C)-E^{D}(C')|&\leq& |\sum\limits_{\textbf{x}\in D} \| \textbf{x}-\textbf{c}_{\textbf{x}}\|^2-\| \textbf{x}-\textbf{c}_{\textbf{x}}'\|^2|\nonumber\\
 &\leq& \sum\limits_{\textbf{x}\in D} |\| \textbf{x}-\textbf{c}_{\textbf{x}}\|^2-\| \textbf{x}-\textbf{c}_{\textbf{x}}'\|^2|\nonumber \\
 &\leq& \sum\limits_{\textbf{x}\in D} (\| \textbf{x}-\textbf{c}_{\textbf{x}}\|+\| \textbf{x}-\textbf{c}_{\textbf{x}}'\|)\cdot \nonumber \\
 && |\| \textbf{x}-\textbf{c}_{\textbf{x}}\|-\| \textbf{x}-\textbf{c}_{\textbf{x}}'\||\nonumber \\
 &\leq& \sum\limits_{\textbf{x}\in D} \varepsilon_{w} \cdot (2 \cdot \|\textbf{x}-\textbf{c}_{\textbf{x}}'\|+\varepsilon_{w}) \nonumber \\
  &\leq& n \cdot \varepsilon_{w}^{2} + 2 \cdot n \cdot \max\limits_{\textbf{x} \in D}\|\textbf{x}-\textbf{c}_{\textbf{x}}'\|\cdot \varepsilon_{w} \nonumber \\
    &\leq& n \cdot \varepsilon_{w}^{2} + 2 \cdot n \cdot l\cdot \varepsilon_{w}= \varepsilon \nonumber
 \end{eqnarray}

 \end{proof}

 In Theorem \ref{lemma:wellassigned2}, we show an interesting property of the BW$K$M algorithm. We verify that a fixed point of the weighted Lloyd's algorithm, over a partition with only well assigned blocks, is also a fixed point of Lloyd's algorithm over the entire dataset $D$. 

{\renewcommand\footnote[1]{}\thirdtheo*}
\begin{proof}

$C=\{\textbf{c}_1, \ldots, \textbf{c}_K\}$ is a fixed point of the weighted $K$-means algorithm, on a partition $\mathcal{P}$, if and only if
when applying an additional iteration of the weighted $K$-means algorithm on $\mathcal{P}$,
the generated clusterings $\mathcal{G}_{1}(\mathcal{P}), \ldots, \mathcal{G}_{K}(\mathcal{P})$, i.e., 
$\mathcal{G}_{i}(\mathcal{P}) \coloneqq \{P \in \mathcal{P}: \textbf{c}_{i}=\argmin\limits_{\textbf{c} \in C} \|\overline{P}-\textbf{c}\| \}$, satisfies
$\textbf{c}_{i}=\frac{\sum\limits_{P \in \mathcal{G}_{i}(\mathcal{P})} |P| \cdot \overline{P}}{\sum\limits_{P \in \mathcal{G}_{i}(\mathcal{P})} |P|}$ for all $i=\{1, \ldots, K\}$  (1).

Since all the blocks $B \in \mathcal{B}$ are well assigned, then the clusterings of $C$ in $D$,
$\mathcal{G}_{i}(D) \coloneqq \{\textbf{x} \in D: \textbf{c}_{i}=\argmin\limits_{\textbf{c} \in C} \|\textbf{x}-\textbf{c}\| \}$,
satisfy $|\mathcal{G}_{i}(D)| = \sum\limits_{P \in \mathcal{G}_{i}(\mathcal{P})} |P|$ (2) and
$\sum\limits_{\textbf{x} \in  \mathcal{G}_{i}(D)} \textbf{x}=\sum\limits_{P \in \mathcal{G}_{i}(\mathcal{P})} \sum\limits_{\textbf{x} \in P} \textbf{x}$ (3). From (1), (2) and (3), we have

\begin{eqnarray}\label{EQ3}
\textbf{c}_{i}&=&\frac{\sum\limits_{P \in \mathcal{G}_{i}(\mathcal{P})} |P| \cdot \overline{P}}{\sum\limits_{P \in \mathcal{G}_{i}(\mathcal{P})} |P|}= \frac{\sum\limits_{P \in \mathcal{G}_{i}(\mathcal{P})} |P| \cdot \sum\limits_{\textbf{x} \in P} \frac{\textbf{x}}{|P|}}{\sum\limits_{P \in \mathcal{G}_{i}(\mathcal{P})} |P|}\nonumber\\
&=& \frac{\sum\limits_{P \in \mathcal{G}_{i}(\mathcal{P})} \sum\limits_{\textbf{x} \in P} \textbf{x}}{\sum\limits_{P \in \mathcal{G}_{i}(\mathcal{P})} |P|}=\frac{\sum\limits_{\textbf{x} \in  \mathcal{G}_{i}(D)} \textbf{x}}{|\mathcal{G}_{i}(D)|} \forall \ i \in{1,\ldots, K}, \nonumber
\end{eqnarray}
this is, $C$ is a fixed point of $K$-means algorithm on $D$.
 \end{proof}

\ifCLASSOPTIONcompsoc
  \section*{Acknowledgments}
\else
  \section*{Acknowledgment}
\fi

Marco Capó and Aritz Pérez are partially supported by the Basque Government through the BERC 2014-2017 program and the ELKARTEK program, and by the Spanish Ministry of Economy and Competitiveness MINECO: BCAM Severo Ochoa excelence accreditation SVP-2014-068574 and SEV-2013-0323. Jose A. Lozano is partially supported by the Basque Government (IT609-13), and Spanish Ministry of Economy and Competitiveness MINECO (BCAM Severo Ochoa excellence accreditation SEV-2013-0323 and TIN2016-78365-R).

\ifCLASSOPTIONcaptionsoff
  \newpage
\fi

\end{document}